\newtheorem{proposition}{Proposition}
\newtheorem{corollary}{Corollary}
\begin{document}
%
\title{Product Reservoir Computing: Time-Series Computation with Multiplicative Neurons}

\author{\IEEEauthorblockN{Alireza Goudarzi}
\IEEEauthorblockA{Department of Computer Science\\
University of New Mexico\\
Albuquerque, NM 87131\\
Email: alirezag@cs.unm.edu}
\and
\IEEEauthorblockN{Alireza Shabani}
\IEEEauthorblockA{Google Inc.\\
 Venice, CA 90291\\
Email: shabani@google.com}
\and
\IEEEauthorblockN{Darko Stefanovic}
\IEEEauthorblockA{Department of Computer Science and \\ 
Center for Biomedical Engineering\\
University of New Mexico\\
Albuquerque, NM 87131\\
Email: darko@cs.unm.edu}}


%


\maketitle

\begin{abstract}
Echo state networks (ESN), a type of reservoir computing (RC) architecture, are  efficient and accurate artificial neural systems for time series processing and learning. An ESN consists of a core of recurrent neural networks, called a reservoir, with a small number of tunable parameters to generate a high-dimensional representation of an input, and a readout layer which is easily trained using regression to produce a desired output from the reservoir states. Certain computational tasks involve real-time calculation of high-order time correlations, which requires nonlinear transformation either in the reservoir or the readout layer. Traditional ESN employs a reservoir with sigmoid or {\it{tanh}} function neurons. In contrast, some types of  biological neurons obey response curves that can be described as a product unit rather than a sum and threshold. Inspired by this class of neurons, we introduce a RC architecture with a reservoir of product nodes for time series computation. We find that the product RC shows many properties of standard ESN such as short-term memory and nonlinear capacity. On standard benchmarks for chaotic prediction tasks, the product RC  maintains the performance of a standard nonlinear ESN while being more amenable to mathematical analysis. Our study provides evidence that such networks are powerful in highly nonlinear tasks owing to high-order statistics generated by the recurrent product node reservoir.
\end{abstract}
\IEEEpeerreviewmaketitle

\section{Introduction}
Understanding contextual information processing in the brain is one of the goals of neuroscience \cite{Wang2001455}.
%
%
%
Dominey et al. \cite{Dominey:1995qo} proposed a simple model to explain the interaction between the prefrontal cortex, corticostriatal projections, and basal ganglia in context-dependent motor control of eyes. In this model, visual input drives stable activity in the prefrontal cortex, which is projected onto basal ganglia using learned interactions in the striatum. This model has also been used to explain higher-level cognitive tasks such as grammar comprehension in the brain \cite{10.1371/journal.pone.0052946}. 

More abstract versions of this model, Liquid State Machines \cite{Maass:2002p1444} and Echo State Networks \cite{Maass:2002p1444,Jaeger02042004}, were later introduced in the neural network community  and were subsequently unified under the name {\em reservoir computing} (RC) \cite{verstraeten2007}. In RC, a fixed high-dimensional recurrent network, called the reservoir, is driven by an input signal. An adaptive readout layer then combines the reservoir states to produce a desired output. Figure~\ref{fig:rcdyn} provides a conceptual illustration of RC. ESN implements this idea with a discrete-time recurrent network with linear or  activation functions and a linear readout layer being trained using regression. Many variations of ESN exist and have been successfully applied to  engineering tasks such as time series prediction and system identification \cite{springerlink:10.1007}.

Owing to fixed recurrent connections in ESN, its training is much more efficient than ordinary recurrent neural networks (RNN), making it feasible to use their power in practical applications. ESN's power in time series processing has been attributed to the reservoir's memory \cite{PhysRevLett.92.148102,Dambre:2012fk} and high-dimensional projection of the input, which acts like a temporal discriminant kernel \cite{Hermans:2011fk} that is present in the critical dynamical regime, where input perturbations in the reservoir dynamics neither spread nor die out \cite{Bertschinger:2004p1450,PhysRevE.87.042808,4905041020100501}.

A major research direction in RC is to study how the choice of reservoir and readout layer architecture may improve the performance in different tasks \cite{springerlink:10.1007}. Recent insights into the nature of computation in ESN \cite{PhysRevLett.92.148102,Dambre:2012fk,Goudarzi2014176} show that the readout layer learns the temporal correlations between the reservoir dynamics and the desired output. Traditional {\it{tanh}} activation function in the reservoir creates nonlinear correlations that are challenging to characterize mathematically  and may lead to unpredictable results \cite{DBLP:journals/nn/YildizJK12}.

Here, we propose that additive neurons with a thresholding transfer function can be replaced by multiplicative neurons and no additional nonlinearity. The use of product nodes in neural networks was introduced in \cite{Durbin:1989wu} in an effort to learn the suitable high-order statistics for a given task. It has been reported that most synaptic interactions are multiplicative \cite{McAdams01011999}. Examples of such  multiplicative scaling in visual cortex include gaze-dependent input modulation in parietal neurons \cite{Andersen25101985}, modulation of neuronal response by attention in the V4 area \cite{McAdams01011999} and the MT area \cite{Treue01091999}. In addition, locust visual collision avoidance mediated by LGMD neurons \cite{Gabbiani:2002kl}, optomotor control in flies \cite{Geiger:1974fr,Gtz:1975qf}, and barn owl's auditory localization in inferior colliculus (ICx) neurons can only be explained with multiplicative interactions \cite{Pena13042001}.

Another popular architecture which uses product nodes is the ridge polynomial network \cite{377967}. In this architecture the learning algorithm iteratively adds groups of product nodes with integer weights to the network to compute polynomial functions of the inputs. This process continues until a desired error level is reached. The advantage of the product node with variable exponent over the ones used in polynomial networks is that instead of providing fixed integer power of inputs, the network can learn the individual exponents that can produce the required pattern \cite{Giles:87}.

\begin{figure*}[!t]
\centering
\includegraphics[width=5in]{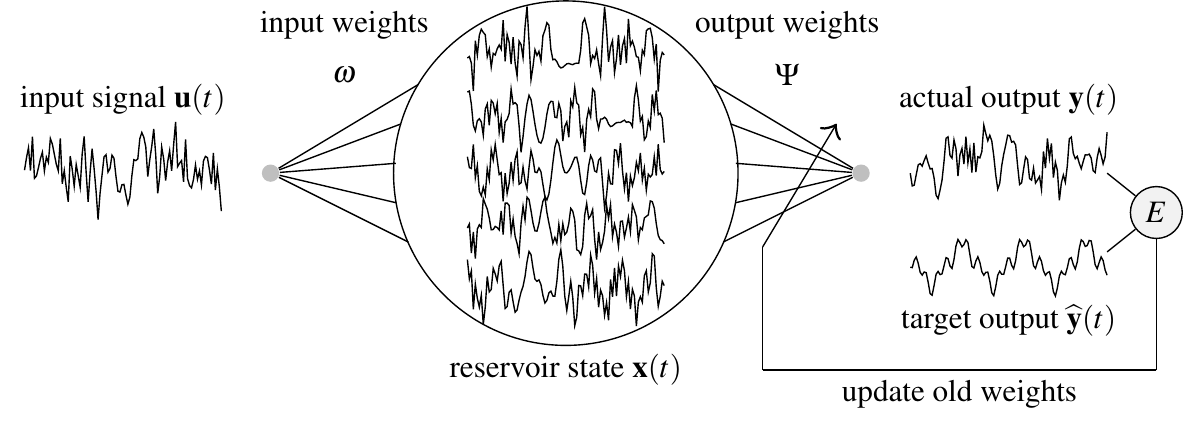}
\caption{Computation in an ESN. The reservoir is an excitable recurrent network with $N$ readable output states represented by the vector ${\bf X}(t)$. The input signal ${\bf u}(t)$ is fed into one or more points $i$ in the reservoir with a corresponding weight $\boldsymbol\omega_i$, denoted by the weight column vector ${\boldsymbol\omega}=[\boldsymbol\omega_i]$.}
\label{fig:rcdyn}
\end{figure*}

The main contribution of this work is to demonstrate the plausibility of product nodes for recurrent neural networks in the context of reservoir computing. In Section~\ref{sec:standard_esn} we review the basic ESN architecture that we use as a performance baseline to study product RC. In Section~\ref{sec:ff_fb_pu}, we describe the details of  product nodes and some practical considerations for their use in a recurrent neural network. In Section~\ref{sec:pu_esn}, we present our proposed architecture for reservoir computing using product nodes, specifically, the replacement of {\it{tanh-}} nodes in the ESN reservoir with product nodes and adjusting the initialization strategy accordingly. We show how to use the exponential-of-log trick to simulate the dynamics of product recurrent networks efficiently using ordinary matrix products. We also prove the echo state property for the network to show that the network dynamics is insensitive to initial conditions. The experimental study on information processing properties of product RCs is presented in Section~\ref{sec:results}. We first study the memory and also nonlinear memory capacity of product RCs, then we evaluate how such networks perform in predicting Mackey-Glass and Lorenz systems. Our results show that the product RC achieves performance similar to ESN with {\it{tanh}} functions.

\section{Model}
\label{sec:model}

\subsection{Echo State Network}
\label{sec:standard_esn}
An ESN consists of an input-driven recurrent neural network, which acts as the reservoir, and a readout layer that reads the reservoir states and produces the output. Mathematically, the input driven reservoir is defined as follows. Let $N$ be the size of the reservoir. We represent the time-dependent inputs as a column vector $u(t)$, the reservoir state as a column vector $x(t)$, and the output as a column vector $y(t)$. The input connectivity is represented by the matrix $\boldsymbol\omega$ and the reservoir connectivity is represented by an $N\times N$ weight matrix $\boldsymbol\Omega$. For simplicity, we assume that we have one input signal and one output, but the notation can be extended to multiple inputs and outputs. The time evolution of the reservoir is given by:
\begin{equation}
x(t+1) = f(\boldsymbol\Omega x(t) + \boldsymbol\omega u(t)).
\end{equation}
where $f$ is the transfer function of the reservoir nodes that is applied element-wise to its operand. An optional constant $b$ can be added to the operand to serve as the bias to the reservoir nodes. The transfer function $f$ is usually {\it{tanh}} or linear functions. The output is generated by the multiplication of  a readout weight matrix $\boldsymbol\Psi$ of length  $N+1$ and the reservoir state vector $x(t)$, extended by an optional constant $1$, represented by $x'(t)$:
\begin{equation}
y(t) = \boldsymbol\Psi   x'(t).
\end{equation}

The readout weights $\boldsymbol\Psi$ need to be trained using a teacher input-output pair. A popular training technique is to use the pseudo-inverse method\cite{verstraeten2007}. To use this method, one would drive the ESN with a teacher input and record the history of the reservoir states into a matrix ${\bf X}$, where the columns correspond to the reservoir nodes and the rows are the states of each reservoir node in time. 
The corresponding teacher output will be denoted by the column vector ${\bf \widehat{y}}$. The readout can be calculated as follows:
\begin{equation}
\Psi = \langle {\bf XX'}\rangle^{-1} \langle {\bf X} {\bf \widehat{Y}'}\rangle,
\label{eq:regression}
\end{equation}
where $'$ indicates the transpose of a matrix. Figure~\ref{fig:arch_esn} show the architecture of ESN. We will compare these two architectures with our proposed product node ESN architecture.

\subsection{Feed-forward and Feedback Product Nodes}
\label{sec:ff_fb_pu}
One of the goals of neural network research is to discover how high-order interactions can be represented using simple nodes inspired by neurons. Following the observation of multiplicative interactions in NDMA receptors at the level of a single neuron \cite{Collingridge:px}, Durbin and Rumelhart \cite{Durbin:1989wu} proposed product nodes in which different stimuli are raised to a power given by the respective synaptic weights and multiplied together. Single-neuron multiplicative interactions of this type have also been observed in owl ICx neurons \cite{Pena13042001} and locust LGMD neurons \cite{Gabbiani:2002kl}. Before giving a prescription for product RC, we briefly review the properties of a single product node with feed-forward and feedback connections. The original product node was defined as follows \cite{Durbin:1989wu}: 
\begin{align*}
x  = f(u_1^{\omega_1}u_2^{\omega_2}\dots u_N^{\omega_N})=f(\Pi_{i=1}^{N} u_i^{\omega_i}),
\end{align*}
where $x$ is the output of the product node, $f$ is the activation function, $u_i$ represent $N$ different input stimuli, and $\omega_i$ the corresponding weights on the inputs. We generalize this model to include a feedback connection by which the node can use multiplicative interaction with its input history to produce the output. This can be represented as follows:
\begin{align*}
x(t)  &= f(x(t-1)^\Omega u(t-1)_1^{\omega_1}u(t-1)_2^{\omega_2}\dots u(t-1)_N^{\omega_N}) \\
      &= f(x(t-1)^\Omega \Pi_{i=1}^N u(t-1)_i^{\omega_i}),
\end{align*}

\noindent where $\Omega$ represents the weight of the feedback connection. Note that the multiplicative coupling imposes some additional constraints on the admissible ranges for $\Omega$, $\omega$, $u$. Namely, a zero value in $x(t)$ and/or $u(t)$ at any point in time forces a reset of the entire history in the node, killing the value of its short-term memory. Moreover, the memory of an entire network of product nodes will be erased if a single node becomes zero. To achieve short-term memory with  multiplicative feedback nodes, we must choose the exponents such that the old inputs approach the value 1 and their effect diminishes over time. A possible choice is $u(t) \in (0,1]$, $\Omega \in (0,1]$, and $\omega \in (0,1]$. It is noteworthy that $u(t) \in [-1,0)$ is an admissible input, but will result in complex values that could be interpreted as a mechanism for simultaneously encoding firing rate and phase information in a biological neuron \cite{Reichert2014}. The complete analysis of the effect of such a behavior is beyond the scope of this work.  Figure~\ref{fig:pu_values} illustrates the output values of a product node $u^\omega$ for positive and negative input domains and different input weights $\omega$. 

\begin{figure}[h]
\centering
\subfloat[nonlinear  reservoir]{
\includegraphics[width=2.9in]{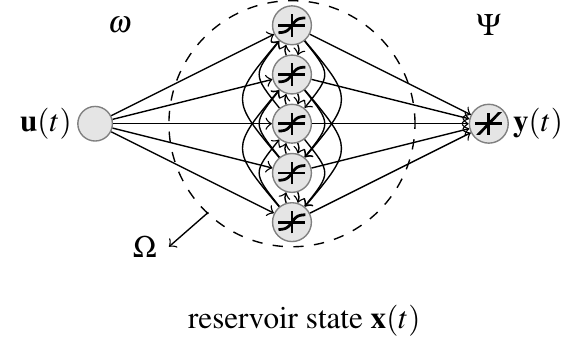}%
}\\
\caption{Schematic of an ESN. A time-varying input signal ${\bf u}(t)$ derives a dynamical core called a reservoir. The states of the reservoir ${\bf x}(t)$  are combined linearly to produce the output ${\bf y}(t)$. The reservoir consists of $N$ nodes. The input and the reservoir connections are given by the vector $\boldsymbol\omega$ and the matrix $\boldsymbol\Omega$ respectively. The reservoir states and the constant are connected to the readout layer using the weight matrix $\boldsymbol\Psi$.
}
\label{fig:arch_esn}
\end{figure}


Another practical consideration is that as the feedback weight $\Omega$ approaches $1$, the  output of the product unit will approach $0$ due to its long multiplicative history in the range $(-1,1)$. This is similar to the saturating effect of a {\it{tanh}} function in a standard ESN. We found that for $\Omega > 0.8$ the dynamics of a feedback node is not suitable for storing its input history.

\begin{figure}[t]
\centering
\subfloat[$u=0.1$]{
\includegraphics[width=1.75in]{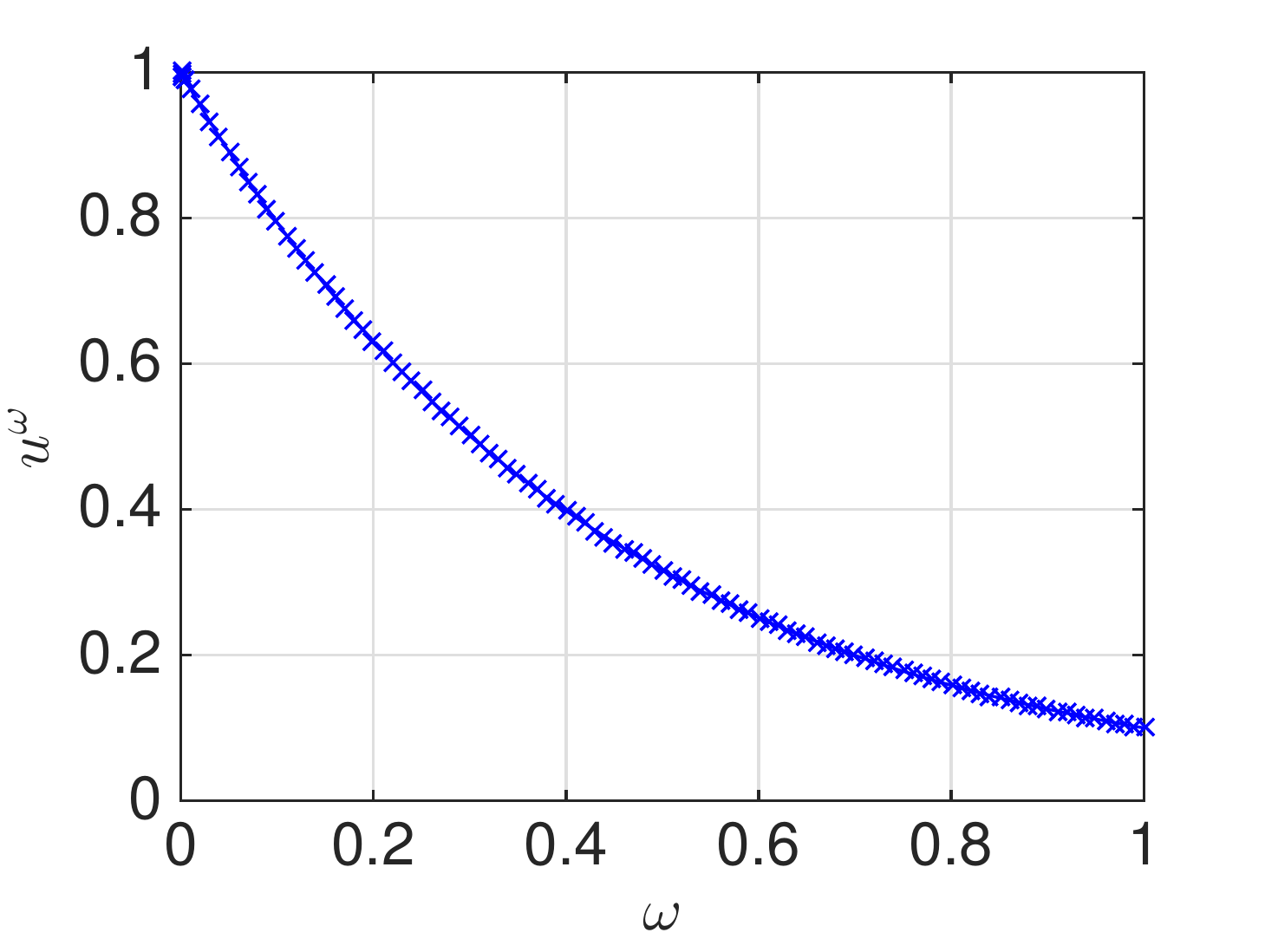}%
\label{fig:pu_values2}
}
\subfloat[$u=-0.1$]{
\includegraphics[width=1.75in]{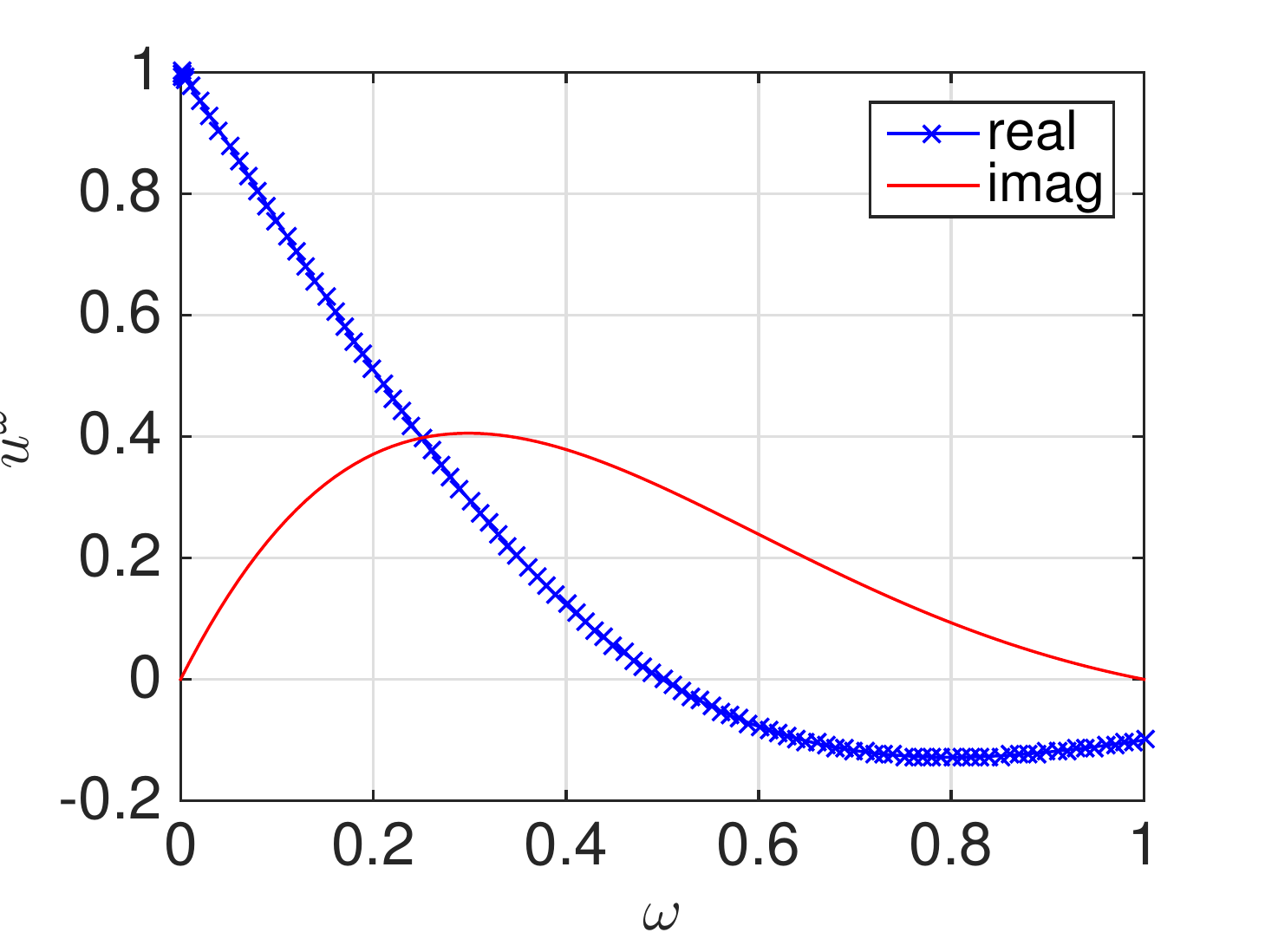}%
\label{fig:pu_values1}}
\caption{Example of the behavior of a product node with positive and negative input $u$ for different input weights $\omega\in [0,1]$. The complex values may be interpreted as simultaneously encoding for firing rate and phase information.}
\label{fig:pu_values}
\end{figure}

\subsection{ESN Architecture with Recurrent Product Network}
\label{sec:pu_esn}

We now consider the general case of a network with multiple nodes. For simplicity we use product nodes with linear activation function in the reservoir and a linear readout layer trained with ordinary regular regression. For very small input weights, {\it{tanh-}} ESN behave very similar to linear ESN; an appropriate combination of input weights scaling and reservoir bias $b$ can map the inputs onto the nonlinear regions of the {\it{tanh}} function, which dramatically improves the performance of the ESN for nonlinear tasks \cite{5596492}. We will later show that despite linear activation, this architecture achieves a similar performance to ESN with {\it{tanh}} activations on standard benchmark tasks.

We use $N$ coupled product nodes with linear activation to build a recurrent product network as our reservoir. The coupling is given by an $N\times N$ matrix $\boldsymbol\Omega=[\Omega_{i,j}]$, where $\Omega_{i,j}$ is the weight from node $j$ to $i$. Each node also receives a connection from the input $u(t)$. The input connectivity is given by the vector $\boldsymbol\omega = [\omega_i]$, where $\omega_i$ is the weight from input to the reservoir node $i$. Without loss of generality we restrict ourselves to networks with one input and one output. The state of the reservoir at each time  is given by the vector ${\bf x}(t)=[x_i]$, where $x_i$ is the node $i$. We assume both inputs and reservoir states are defined over compact sets. The time evolution of each node $i$ is given by: 
\begin{equation}
x_i(t) = \Pi_{j=0}^N x_j(t-1)^{\Omega_{i,j}} u(t-1)^{\omega_i}.
\label{eq:nodeevol}
\end{equation}

The following proposition gives us a way to simulate the network dynamics using normal matrix product.

\begin{proposition} 
Global dynamics of the recurrent product network given by Equation~\ref{eq:nodeevol} can be expressed as follows:
\begin{equation}
{\bf x}(t) = \exp\left(\boldsymbol\Omega\log {\bf x}(t-1) + \boldsymbol\omega\log u(t-1)\right),
\label{eq:prop1}
\end{equation}
where the $\log$ and $\exp$ are applied element-wise.
\label{prop1}
\end{proposition}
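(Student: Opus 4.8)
The plan is to obtain Equation~\ref{eq:prop1} directly from the scalar recurrence in Equation~\ref{eq:nodeevol} by taking logarithms componentwise; the statement is essentially an algebraic identity, so no limiting or convergence argument is involved. First I would fix a node index $i$ and apply the natural logarithm to both sides of Equation~\ref{eq:nodeevol}. Using the elementary identities $\log(ab)=\log a+\log b$ and $\log(a^{c})=c\log a$, the right-hand side
$\log\!\big(\prod_{j=0}^{N} x_j(t-1)^{\Omega_{i,j}}\,u(t-1)^{\omega_i}\big)$
expands to $\sum_{j=0}^{N}\Omega_{i,j}\log x_j(t-1)+\omega_i\log u(t-1)$, so that $\log x_i(t)=\sum_{j}\Omega_{i,j}\log x_j(t-1)+\omega_i\log u(t-1)$.

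Next I would recognize the sum $\sum_{j}\Omega_{i,j}\log x_j(t-1)$ as the $i$-th entry of the matrix–vector product $\boldsymbol\Omega\,(\log{\bf x}(t-1))$, where $\log$ acts element-wise on the vector ${\bf x}(t-1)$, and $\omega_i\log u(t-1)$ as the $i$-th entry of $\boldsymbol\omega\log u(t-1)$. Collecting these identities over all $i=1,\dots,N$ gives the vector equation $\log{\bf x}(t)=\boldsymbol\Omega\log{\bf x}(t-1)+\boldsymbol\omega\log u(t-1)$ with $\log$ applied element-wise. Applying the element-wise exponential to both sides, and using that $\exp$ inverts $\log$ entrywise, yields Equation~\ref{eq:prop1}.

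The only point requiring care is the domain on which the logarithm is taken: the expansion above presupposes $x_j(t-1)\neq 0$ and $u(t-1)\neq 0$ so that every logarithm is defined. This is precisely the regime singled out in Section~\ref{sec:ff_fb_pu}, where inputs and reservoir states are confined to $(0,1]$ and a zero entry is excluded because it would reset a node's entire multiplicative history. Under that positivity assumption all logarithms are real and single-valued and $\exp\circ\log=\mathrm{id}$, so the manipulation is fully rigorous. If negative inputs were admitted, $\log$ would have to be read as a complex logarithm with a fixed branch and the same algebra would carry over formally; I would note this but not pursue it, consistent with the paper deferring the complex-valued analysis. Thus there is no genuine obstacle — the proof is a one-line logarithmic change of variables — and the main ``work'' is just recording the admissible ranges that make each logarithm meaningful.
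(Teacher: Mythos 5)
Your proof is correct and follows essentially the same route as the paper's: take logarithms of Equation~\ref{eq:nodeevol} componentwise, recognize the resulting linear system as the matrix--vector form $\log{\bf x}(t)=\boldsymbol\Omega\log{\bf x}(t-1)+\boldsymbol\omega\log u(t-1)$, and exponentiate element-wise. Your added remark on the positivity domain needed for the logarithm to be well-defined is a sensible precaution that the paper handles only implicitly via its compactness and admissible-range assumptions.
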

\begin{proof}
Recall that the dynamics of reservoir nodes is given by the following system: 
\begin{align*}
x_1(t) &= \Pi_{j=0}^N x_j(t-1)^{\Omega_{1,j}} u(t-1)^{\omega_1} \\
x_2(t) &= \Pi_{j=0}^N x_j(t-1)^{\Omega_{2,j}} u(t-1)^{\omega_2} \\
  \vdots \\
x_N(t) &= \Pi_{j=0}^N x_j(t-1)^{\Omega_{N,j}} u(t-1)^{\omega_N}.
\end{align*}
Taking the logarithm of both sides of each equation we have: 
\begin{align*}
\log x_1(t) &= \sum_{j=0}^N \Omega_{1,j} \log x_j(t-1) + \omega_1\log u(t-1) \\
\log x_2(t) &= \sum_{j=0}^N \Omega_{2,j} \log x_j(t-1) + \omega_2 \log u(t-1)\\
  \vdots \\
\log x_N(t) &= \sum_{j=0}^N \Omega_{N,j} \log x_j(t-1) + \omega_N \log u(t-1).
\end{align*}

This  can be rewritten in compact matrix form as:
\begin{align*}
\log {\bf x}(t) = \boldsymbol\Omega \log {\bf x}(t-1) + \boldsymbol\omega \log u(t-1).
\end{align*}
Finally, an element-wise exponentiation of both sides will give us:
\begin{align*}
{\bf x}(t) = e^{\boldsymbol\Omega\log {\bf x}(t-1) + \boldsymbol\omega\log u(t-1)} \qedhere 
\end{align*}
\end{proof}
\begin{figure*}
\centering
\def\w{2in}
\subfloat[product linear]{
\includegraphics[width=\w]{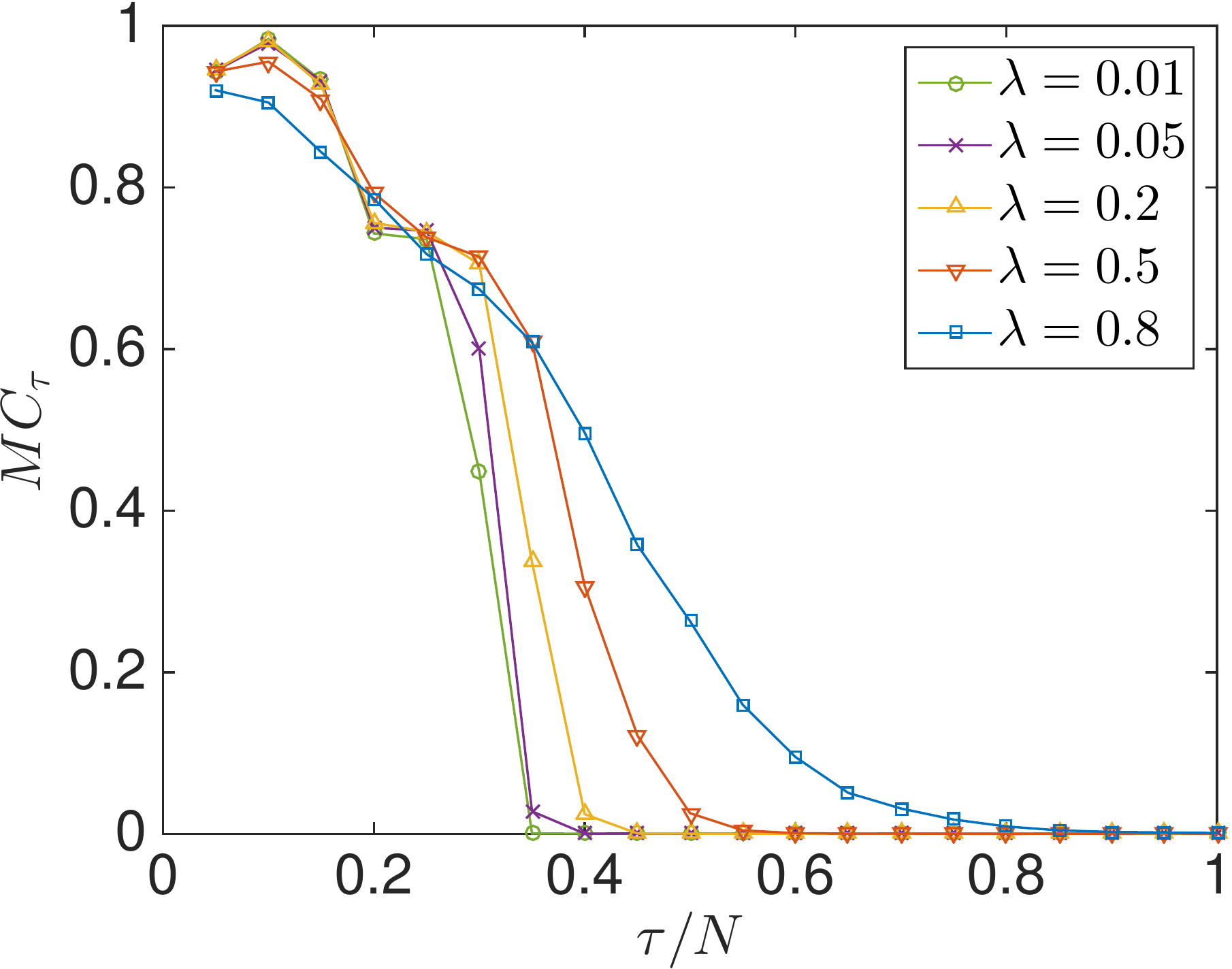}
}
\subfloat[linear]{
\includegraphics[width=\w]{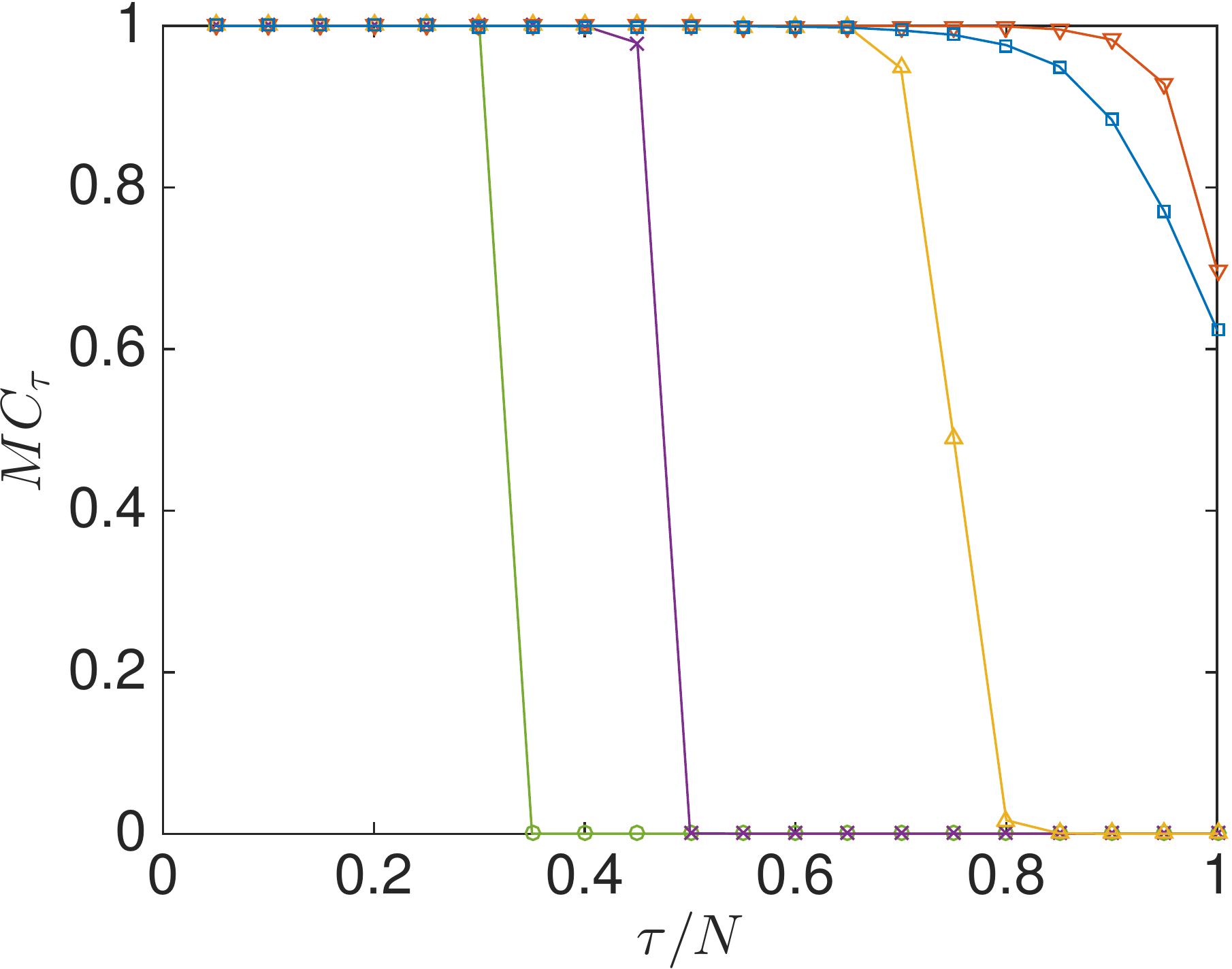}
}
\subfloat[tanh]{
\includegraphics[width=\w]{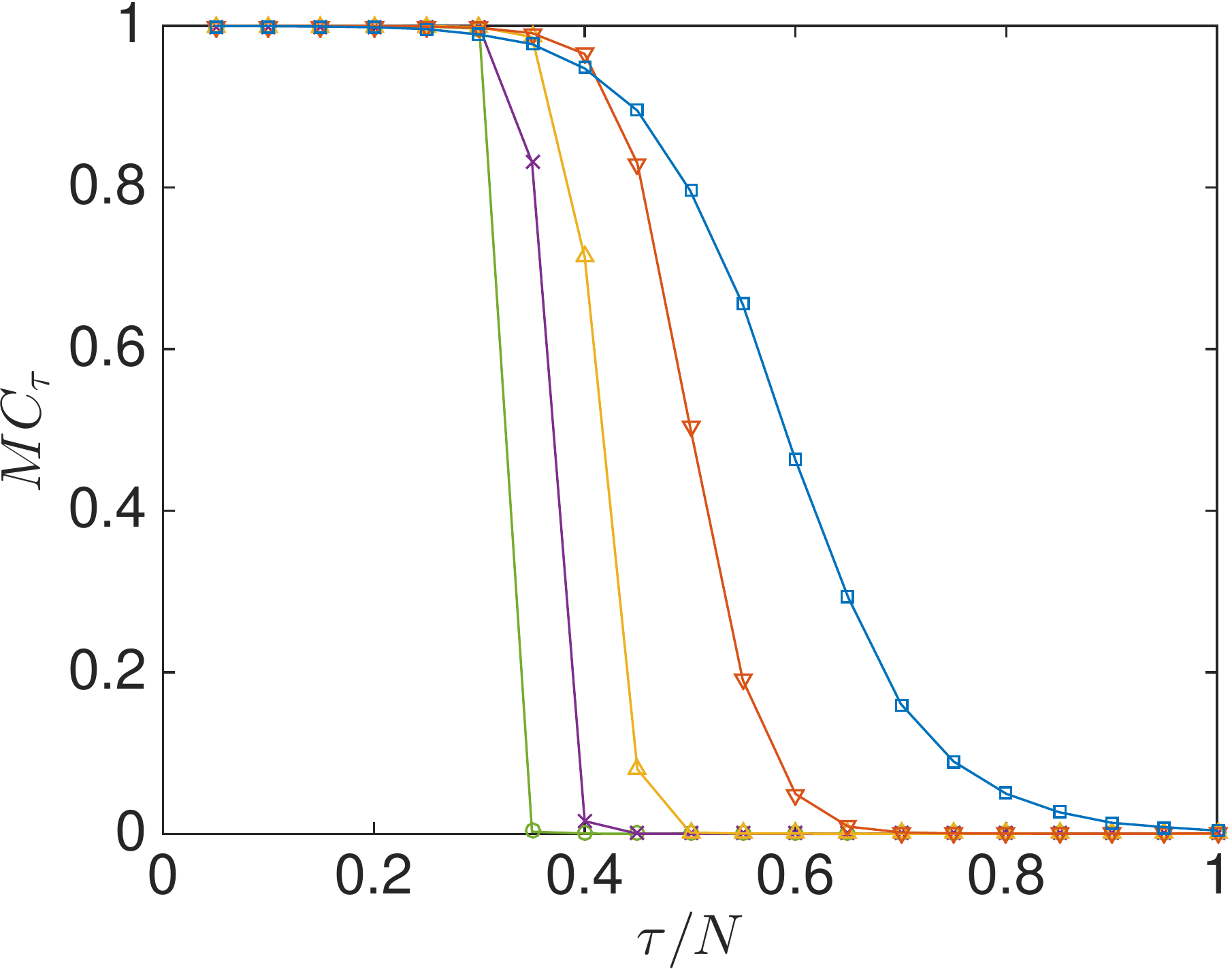}
}
\caption{The linear memory capacity of the product RC and the standard ESN with linear and {\it{tanh}} activation functions for fixed input coefficient $\omega=0.2$ and different $\lambda$. As expected the more nonlinearity in the system the faster the memory curves decrease, i.e., {\it{tanh-}} ESN curves decrease faster than the linear ESN and the product RC faster than the {\it{tanh-}} ESN. 
}
\label{fig:MCfigures}
\end{figure*}

\begin{corollary}
Given a recurrent product network with dynamical equation described by Equation~\ref{eq:nodeevol}, the state of the network at a given time can be explicitly written as a function of the initial state of the network and its input history as follows:
\begin{align}
{\bf x}(t) = \exp\left(\boldsymbol\Omega^t\log {\bf x}(0) + \sum_{i=0}^{t-1}\boldsymbol\Omega^{t-i-1}\boldsymbol\omega\log u(i)\right)
\end{align}
\label{cor1}
\end{corollary}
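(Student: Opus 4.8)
The plan is to unroll the one-step linear recurrence furnished by Proposition~\ref{prop1}. Writing ${\bf z}(t) := \log {\bf x}(t)$ (element-wise), Proposition~\ref{prop1} states exactly that ${\bf z}(t) = \boldsymbol\Omega {\bf z}(t-1) + \boldsymbol\omega \log u(t-1)$, an inhomogeneous linear difference equation with constant coefficient matrix $\boldsymbol\Omega$ and forcing term $\boldsymbol\omega\log u(t-1)$. The closed form for such a recurrence is classical: ${\bf z}(t) = \boldsymbol\Omega^t {\bf z}(0) + \sum_{i=0}^{t-1}\boldsymbol\Omega^{t-i-1}\boldsymbol\omega\log u(i)$. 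Exponentiating element-wise and using that $\exp$ inverts the element-wise $\log$ (legitimate here because the domain assumptions on $u(t)$ and the reservoir states keep all arguments strictly positive) yields the stated expression for ${\bf x}(t)$.

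Concretely I would proceed by induction on $t$. For the base case $t=0$, the sum is empty and $\boldsymbol\Omega^0 = I$, so the right-hand side collapses to $\exp(\log {\bf x}(0)) = {\bf x}(0)$, as required. For the inductive step, assume the formula holds at time $t$; equivalently ${\bf z}(t) = \boldsymbol\Omega^t{\bf z}(0) + \sum_{i=0}^{t-1}\boldsymbol\Omega^{t-i-1}\boldsymbol\omega\log u(i)$. Substitute this into ${\bf z}(t+1) = \boldsymbol\Omega {\bf z}(t) + \boldsymbol\omega\log u(t)$, distribute $\boldsymbol\Omega$ through the sum to get $\boldsymbol\Omega^{t+1}{\bf z}(0) + \sum_{i=0}^{t-1}\boldsymbol\Omega^{t-i}\boldsymbol\omega\log u(i)$, and absorb the extra term $\boldsymbol\omega\log u(t) = \boldsymbol\Omega^{(t+1)-t-1}\boldsymbol\omega\log u(t)$ as the $i=t$ summand, producing $\boldsymbol\Omega^{t+1}{\bf z}(0) + \sum_{i=0}^{t}\boldsymbol\Omega^{(t+1)-i-1}\boldsymbol\omega\log u(i)$. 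Applying element-wise $\exp$ gives the claim at time $t+1$.

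There is no real obstacle here; the only thing requiring a little care is the index bookkeeping in the shift of the summation variable during the inductive step — in particular checking that the exponent $t-i-1$ tracks correctly when $t$ is incremented, and that the newly forced term lands precisely at the new upper limit $i=t$ with exponent $0$. One could equally present the argument as a direct telescoping of the recurrence rather than a formal induction, but the inductive phrasing makes the verification mechanical and leaves nothing implicit.
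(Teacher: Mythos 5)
Your proposal is correct and follows essentially the same route as the paper: unrolling the one-step log-domain recurrence from Proposition~\ref{prop1} to obtain the closed form, then exponentiating. The paper simply expands the recursion for ${\bf x}(1)$ and ${\bf x}(2)$ and asserts the general pattern, whereas you make the same argument rigorous by formal induction with explicit index bookkeeping.
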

\begin{proof}
This can be easily verified by expanding the recursion in Equation~\ref{eq:prop1}:
\begin{align*}
{\bf x}(1) &= \exp\left(\boldsymbol\Omega\log {\bf x}(0) + \boldsymbol\omega\log u(0)\right),\\
{\bf x}(2) &= \exp\left(\boldsymbol\Omega\log {\bf x}(1) + \boldsymbol\omega\log u(1)\right),\\
           &= \exp\left(\boldsymbol\Omega^2\log {\bf x}(0) + \boldsymbol\Omega\boldsymbol\omega\log u(0) + \boldsymbol\omega\log u(1)\right),\\
 {\bf x}(t) &= \exp\left(\boldsymbol\Omega^t\log {\bf x}(0) + \sum_{i=0}^{t-1}\boldsymbol\Omega^{t-i-1}\boldsymbol\omega\log u(i)\right)
\end{align*}

\qedhere
\end{proof}
Computation in ESNs is enabled by an important property which ensures that the reservoir state is asymptotically only a function of its input history. This is called the echo-state property (ESP). In \cite{Jaeger:2001p1442} two conditions
were stipulated for a recurrent network given by a weight matrix $\boldsymbol\Omega$ to hold the ESP: (1) a necessary condition that the spectral radius of $\boldsymbol\Omega$ should not be greater than unity; and (2)  a sufficient condition that the largest singular value of $\boldsymbol\Omega$ should be less than unity. Later, the sufficient condition was deemed too conservative and was updated~\cite{1629106} and the necessary condition was shown to be statistically enough for a good reservoir~\cite{6105577}. Yildiz et al. \cite{DBLP:journals/nn/YildizJK12} presented a pathological example to demonstrate that for an ESN with $\tanh$ functions neither of the conditions guarantees the ESP. However, this only holds for nonlinear systems and for a linear system the weight matrix spectral radius less than unity is enough to guarantee the ESP. The following corollary builds on Corollary~\ref{cor1} and gives us an equivalent of the ESP for recurrent product networks.
\begin{proposition} 
Given a recurrent product network described by Equation~\ref{eq:nodeevol}, the assumed compactness conditions on the inputs and the network state, and a recurrent weight matrix $\boldsymbol\Omega$ with spectral radius $\lambda<1$, the asymptotic global dynamics of the network is only   a function of the input history $u(t)$.
\label{prop2}
\end{proposition}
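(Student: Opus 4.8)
The plan is to read the initial-condition dependence straight off the closed form in Corollary~\ref{cor1} and show it decays. Recall
\begin{align*}
{\bf x}(t) = \exp\left(\boldsymbol\Omega^t\log {\bf x}(0) + \sum_{i=0}^{t-1}\boldsymbol\Omega^{t-i-1}\boldsymbol\omega\log u(i)\right),
\end{align*}
so the only place the initial state ${\bf x}(0)$ enters is through the term $\boldsymbol\Omega^t\log {\bf x}(0)$, while the summation term depends solely on the input history. First I would take two trajectories ${\bf x}(t)$ and $\tilde{\bf x}(t)$ driven by the same input $u(\cdot)$ but started from different admissible initial states; subtracting their log-representations, the input-dependent sums cancel and
\begin{align*}
\log{\bf x}(t)-\log\tilde{\bf x}(t)=\boldsymbol\Omega^{t}\bigl(\log{\bf x}(0)-\log\tilde{\bf x}(0)\bigr).
\end{align*}

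Next I would bound this in norm. By the compactness assumption on the reservoir state (read as: states live in a compact subset of the positive orthant, hence bounded above and bounded away from $0$), the vector $\log{\bf x}(0)-\log\tilde{\bf x}(0)$ has finite norm, say at most $C$. Since $\boldsymbol\Omega$ has spectral radius $\lambda<1$, Gelfand's formula gives $\lim_{t\to\infty}\|\boldsymbol\Omega^{t}\|^{1/t}=\lambda<1$, so $\|\boldsymbol\Omega^{t}\|\to 0$ geometrically. Hence $\|\log{\bf x}(t)-\log\tilde{\bf x}(t)\|\leq C\,\|\boldsymbol\Omega^{t}\|\to 0$, and by continuity of the element-wise exponential on the relevant compact set, $\|{\bf x}(t)-\tilde{\bf x}(t)\|\to 0$. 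Thus the asymptotic trajectory is independent of the initial condition, i.e., a function of the input history alone --- the product-network analogue of the echo state property.

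Finally, to exhibit the asymptotic state explicitly I would reindex the summation term via $k=t-1-i$ to get $\sum_{k=0}^{t-1}\boldsymbol\Omega^{k}\boldsymbol\omega\log u(t-1-k)$; since $\|\boldsymbol\Omega^{k}\|\to 0$ geometrically and $\log u$ is bounded (again by compactness of the input set bounded away from $0$), this series converges absolutely, so the asymptotic dynamics is the ``echo function'' $\exp\bigl(\sum_{k\geq 0}\boldsymbol\Omega^{k}\boldsymbol\omega\log u(t-1-k)\bigr)$ of the input. The one point needing care --- and the main obstacle --- is the well-definedness and uniform boundedness of $\log{\bf x}(t)$: the whole contraction argument hinges on the compactness hypothesis genuinely confining the state to a compact subset of $(0,\infty)^{N}$, so that the zero-reset pathology discussed in Section~\ref{sec:ff_fb_pu} is excluded and $\log{\bf x}(0)-\log\tilde{\bf x}(0)$ is finite. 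A secondary technical point is that one must work with the spectral radius rather than a particular induced norm, which is exactly what Gelfand's formula (equivalently, the existence of an operator norm within $\varepsilon$ of $\lambda$) supplies.
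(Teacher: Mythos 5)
Your proof is correct and follows essentially the same route as the paper's: both read the initial-condition dependence off the closed form in Corollary~\ref{cor1} and observe that the term $\boldsymbol\Omega^t\log{\bf x}(0)$ vanishes as $t\to\infty$ because the spectral radius is below one and compactness keeps $\log{\bf x}(0)$ bounded. You additionally justify the decay via Gelfand's formula, recast the argument as a two-trajectory contraction (which the paper itself notes is an equivalent formulation of the echo state property), and check absolute convergence of the echo series --- details the paper's terser proof leaves implicit.
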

\begin{proof}
First, we note that the dynamics of $\log x(t)$ is linear. In addition, the unity vector ${\bf 1}$ is the nullspace of the system $\boldsymbol\Omega\log {\bf x}(t_0)$ and that $\lim_{t\to\infty} \boldsymbol\Omega^t \log {\bf x}(t_0)=0$. Using Corollary~\ref{cor1} we can write the state of the system at time $t\to\infty$ as follows:
\begin{align*}
\lim_{t\to \infty} {\bf x}(t) &= \lim_{t\to\infty}  \exp\left(\boldsymbol\Omega^t\log {\bf x}(0) + 
 \sum_{i=0}^{t-1}\boldsymbol\Omega^{t-i-1}\boldsymbol\omega\log u(i)\right)\\
 &= \exp\left(\sum_{i=0}^{t-1}\boldsymbol\Omega^{t-i-1}\boldsymbol\omega\log u(i)\right),
\end{align*}
which is a function of only the input history.\qedhere
\end{proof}

We should point out that the derivation of ESP is usually presented in terms of asymptotic difference between the states of two identical ESNs driven by identical inputs that are initialized in different states, i.e., $\lim_{t\to\infty} || x_1(t) - x_2(t)|| = 0$, where $x_1(t)$ and $x_2(t)$ refer to the long-term state of the ESN  initialized with different random values.  It is easy to see that this definition is equivalent to Proposition~\ref{prop2}. We  emphasize that since the systems dynamics is linear in the logarithm of the reservoir states and the unity vector ${\bf 1}$ is the global attractor, Proposition~\ref{prop2} constitutes a necessary and sufficient condition for the ESP in product RCs with linear transfer function.

\section{Experiments}
\label{sec:results}
In this section, we will compare the standard ESN, with linear and {\it{tanh}} activation functions, with the product RC. We will compare the performance of networks on computational capacity tasks and chaotic time-series prediction benchmarks.  

\subsection{Reservoir Construction and Evaluation}
For our experiments, we use fully connected reservoirs with $N$ nodes. The number of reservoir nodes $N$ is adjusted for each task to get reasonably good results in a reasonable amount of time. The reservoir weights $\Omega$ and input weights $\omega$ are drawn from i.i.d. normal distribution with mean zero and standard deviation 1, i.e., $\mathcal{N}(0,1)$. The reservoir is then rescaled to have spectral radius $\lambda$, while the input weights are multiplied by a coefficient $\omega$. For the {\it{tanh}} and linear reservoirs, the reservoir nodes are initialized with $0$s, and for the product reservoirs they are initialized with $1$s.

The reservoirs are driven with task-dependent input $u_t$ for $2,000$ time steps and the readout weights $\Psi$ are calculated as described in Section~\ref{sec:standard_esn} using MATLAB's {\em pinv()} function. For evaluation, the reservoir state is reinitialized and the reservoir is driven for another $T=2,000$ time steps and the output $y_t$ is generated. For brevity, throughout the experiments section we adopt the subscript notation for the time index, e.g. $y_t$ instead of $y(t)$. By convention, the system performance for computational capacity tasks is evaluated using the capacity function $MC_\tau$, which is the coefficient of determination between the output $y_t$ and the desired output $\widehat{y}_t$:
\begin{equation}
MC_{\tau} = \frac{\mathrm{Cov}^2(y_t,\widehat{y}_t)}{\mathrm{Var}(y_t)\mathrm{Var}(\widehat{y}_t)},
\end{equation}
where $\widehat{y}_t=\widehat{y}_t(u_{t-\tau})$ is a function of delayed input $u_{t-\tau}$ and $\tau$ is the memory length for the task. Total capacities are calculated by summing the capacity function over $\tau$: $MC=\sum_\tau MC_\tau$. We use $1\le\tau\le50$ for our empirical estimations. 
Note that for negative inputs to the product RC result in complex-valued outputs, capacities, and errors. Therefore, one must use the modulus of $MC_\tau$ and $NMSE$ for the product RC.

For the chaotic prediction task, the performance is evaluated by calculating the normalized mean-squared-error $NMSE$ as follows:
\begin{equation}
NMSE = \frac{\sqrt{ \frac{1}{T} \sum_{t=0}^{T} (y_t - \widehat{y}_t)^2}}{\mathrm{Var}(\widehat{y}_t)},
\label{eq:nmse}
\end{equation}
where $y_t$ is the network output and $\widehat{y_t}$ is the desired output.

\subsection{Computational Capacity}
\label{sec:membench}

Computational capacity tasks consist of linear memory and nonlinear capacities, which measure how well an ESN can reconstruct a function of its previous inputs. In these sets of experiments reservoirs of size $N=20$ nodes are driven with a one-dimensional input drawn from  uniform distributions on  $(0,1]$.  We systematically choose the input weight coefficients and reservoir spectral radius in the ranges $0.001<\omega<1$ and $0.01<\lambda<0.95$. The intervals are chosen from preliminary experiments to capture the regions of the parameter space where we get the best results or variation in their trends for the purpose of sensitivity analysis. All the results are averaged over 50 runs. The desired output of memory capacity is defined below.

\subsubsection{Linear Memory Capacity}
The linear memory capacity is a standard measure of memory in recurrent neural networks. The $\tau$-delay memory function $MC_\tau$  measures how long the network can remember its inputs. The desired output for this task is defined as:
\begin{equation}
\widehat{y}_t = u_{t-\tau}.
\end{equation}

\subsubsection{Nonlinear Computation Capacity}
The nonlinear computation capacity measures the ability of the system to reconstruct a nonlinear function of its past inputs. Conventionally, Legendre polynomials are used to calculate the nonlinear computation capacity of the reservoir \cite{Dambre:2012fk}; their advantage is that Legendre polynomials of different orders are orthogonal to each other, allowing one to measure the reservoir's capacity to compute functions of varying degrees of nonlinearity independently from each other. The desired output of the Legendre polynomial of order $n$ with delay $\tau$  is given by: 
\begin{equation}
\widehat{y}(n,\tau)_t = \frac{1}{2^n} \sum_{k=0}^n {n \choose k}^2 (u_{t-\tau}-1)^{n-k}(u_{t-\tau}+1)^{k}.
\end{equation}

\subsubsection{Results}
Figure~\ref{fig:MCfigures} shows the results of linear memory capacity experiments for different architectures and reservoir spectral radii. The input coefficient is fixed at $\omega=0.2$.  The $x$-axis shows the time delay as a ratio $\tau/N$ and the curves are averaged over 50 experiments. In general the product RCs show faster decrease in the $MC_\tau$, due to the product nonlinearity. This is  similar in nature to the effect that the saturated {\it{tanh}} activation function has on memory capacity. We then calculate the empirical total memory $MC = \sum_{\tau=0}^{50} MC_\tau$ for different values of $\omega$ and $\lambda$. 

\begin{figure}[h]
\centering
\def\w{1.6in}
\subfloat[total nonlinear capacity]{
\includegraphics[width=\w]{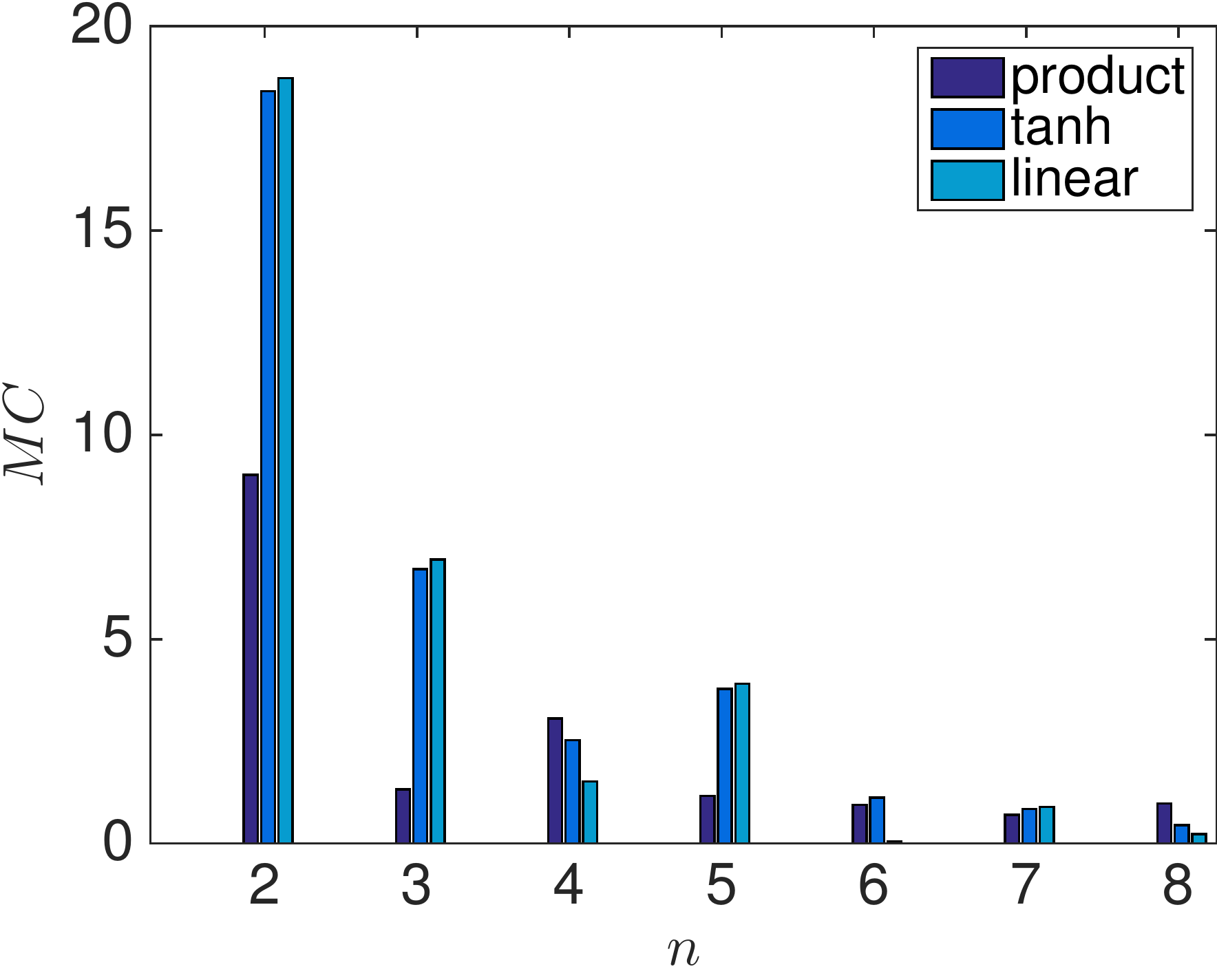}
\label{fig:bestNMC}
}
\subfloat[nonlinear capacity function for $n=3$]{
\includegraphics[width=\w]{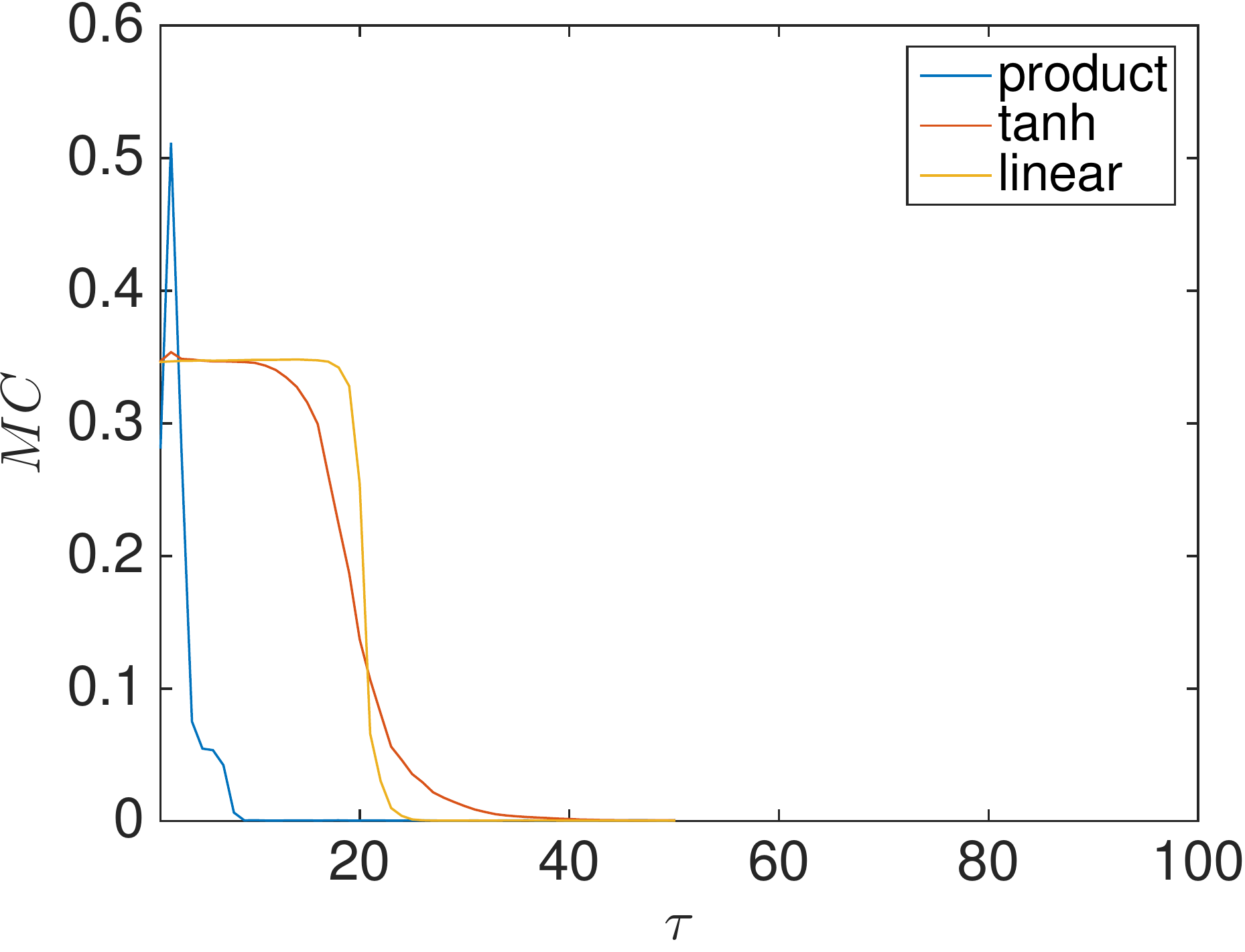}
\label{fig:bestNMC3}
}
\caption{(a) Nonlinear computation capacity of ESN with product nodes, {\it{tanh-}} nodes, and linear nodes. With the exception of $n=3$, the product RC outperform the {\it{tanh-}} ESN in nonlinear capacity. (b) shows the complete nonlinear capacity function for $n=3$. The product network exhibits  long-term memory whereas the {\it{tanh-}} ESN exhibits short-term memory.}
\label{fig:nmc}
\end{figure}

Figure~\ref{fig:bestNMC} summarizes the nonlinear computation capacity for $2\le n\le8$. Product RC clearly shows useful nonlinear computation. However, a complete and fair comparison between the nonlinear capacity of product RC and of $\tanh$ ESN is beyond the scope of this work. First, t has already been reported that the {\it{tanh-}} ESN are unable to perform nonlinear memory tasks with even degrees \cite{Dambre:2012fk}, but adding bias to the reservoir in these networks fixes this problem. In addition, our preliminary experiment shows that multiplicative readout layer in product RCs also significantly improves their performance.
\begin{figure}[t]
\centering
\def\w{1.6in}
\subfloat[product, linear memory]{
\includegraphics[width=\w]{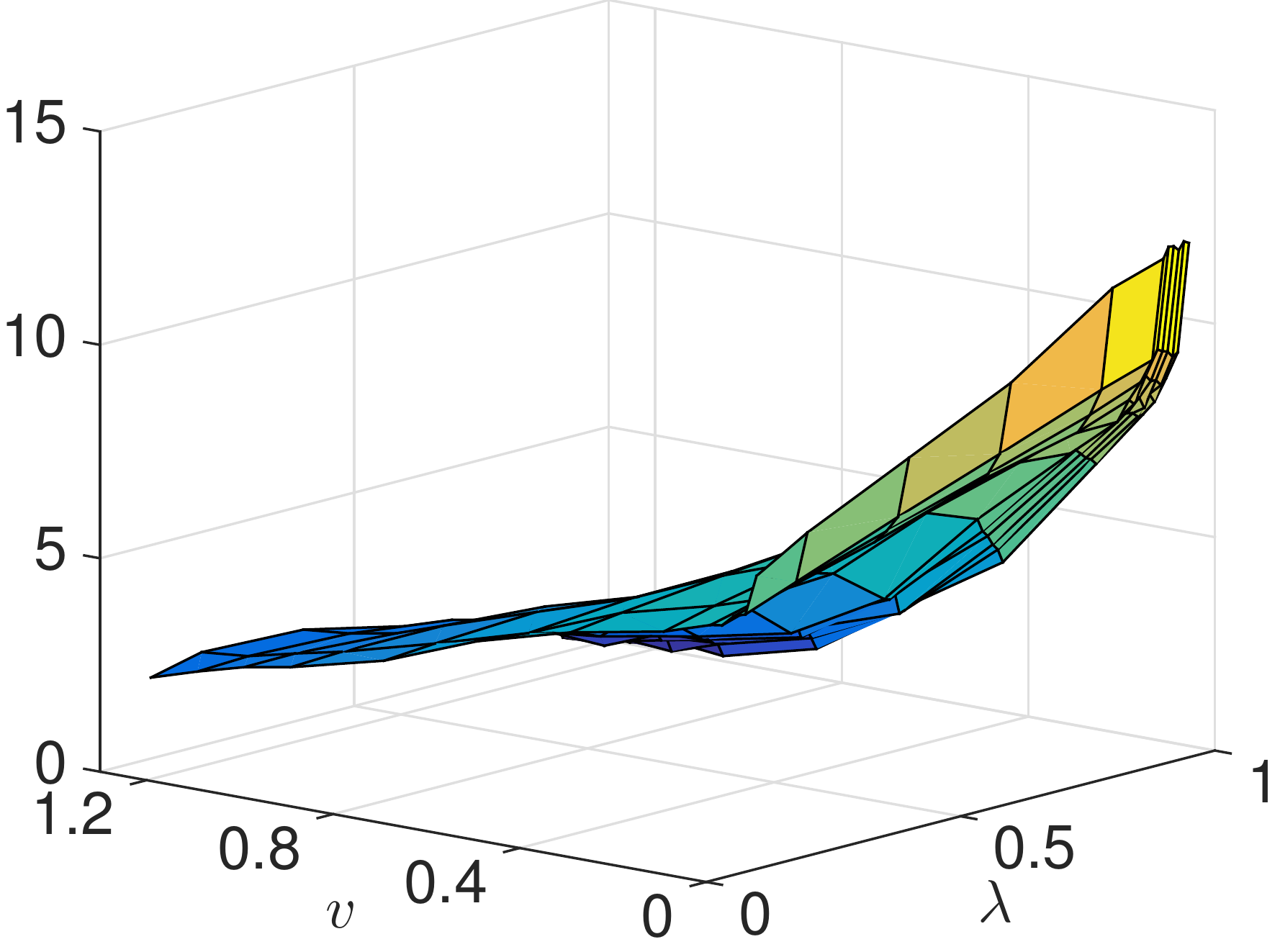}
}
\subfloat[{\it{tanh}}, linear memory]{
\includegraphics[width=\w]{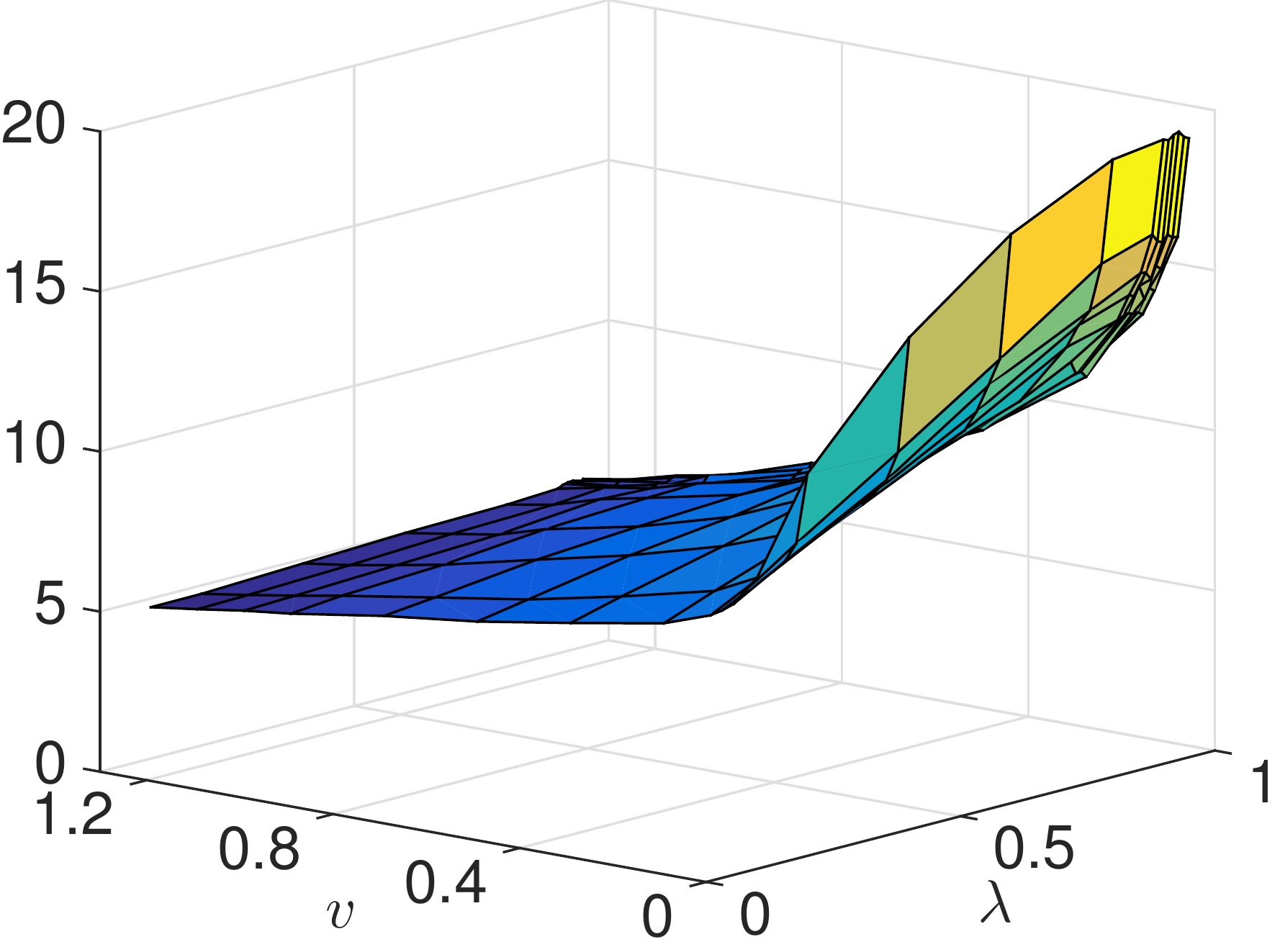}
}
\\
\subfloat[product, nonlinear capacity, $n=3$]{
\includegraphics[width=\w]{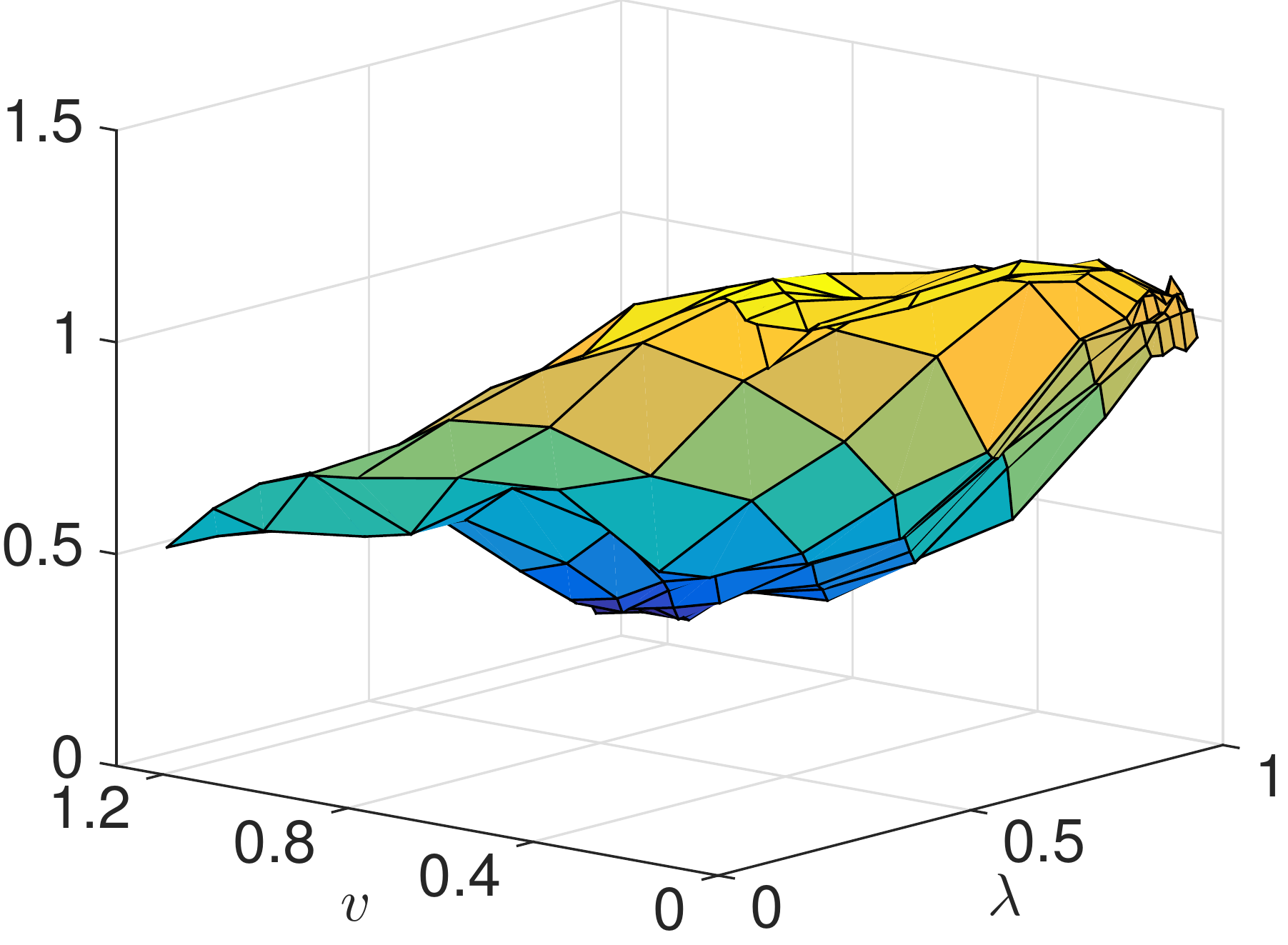}
}
\subfloat[{\it{tanh}}, nonlinear capacity, $n=3$]{
\includegraphics[width=\w]{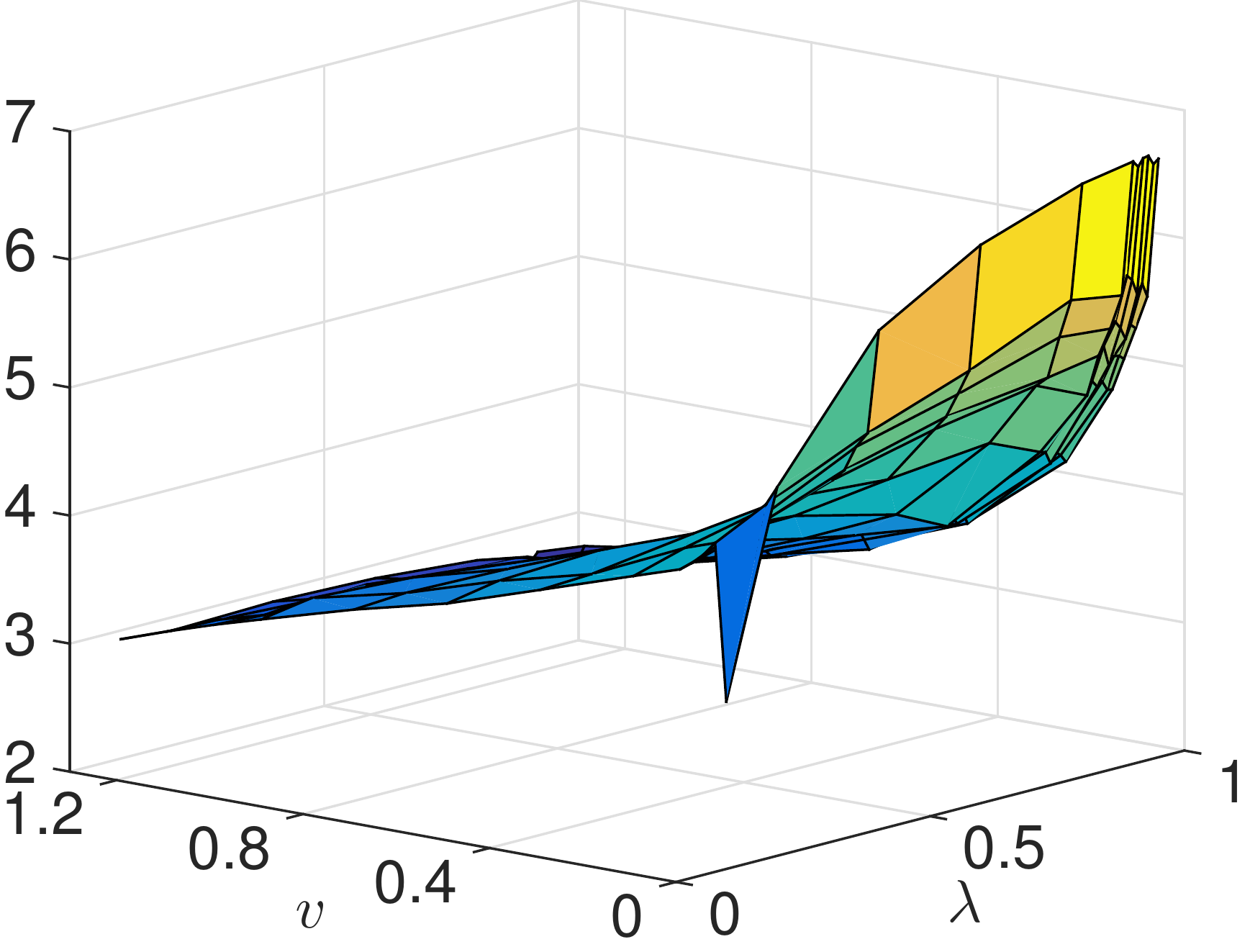}
}
\\
\subfloat[product, nonlinear capacity, $n=5$]{
\includegraphics[width=\w]{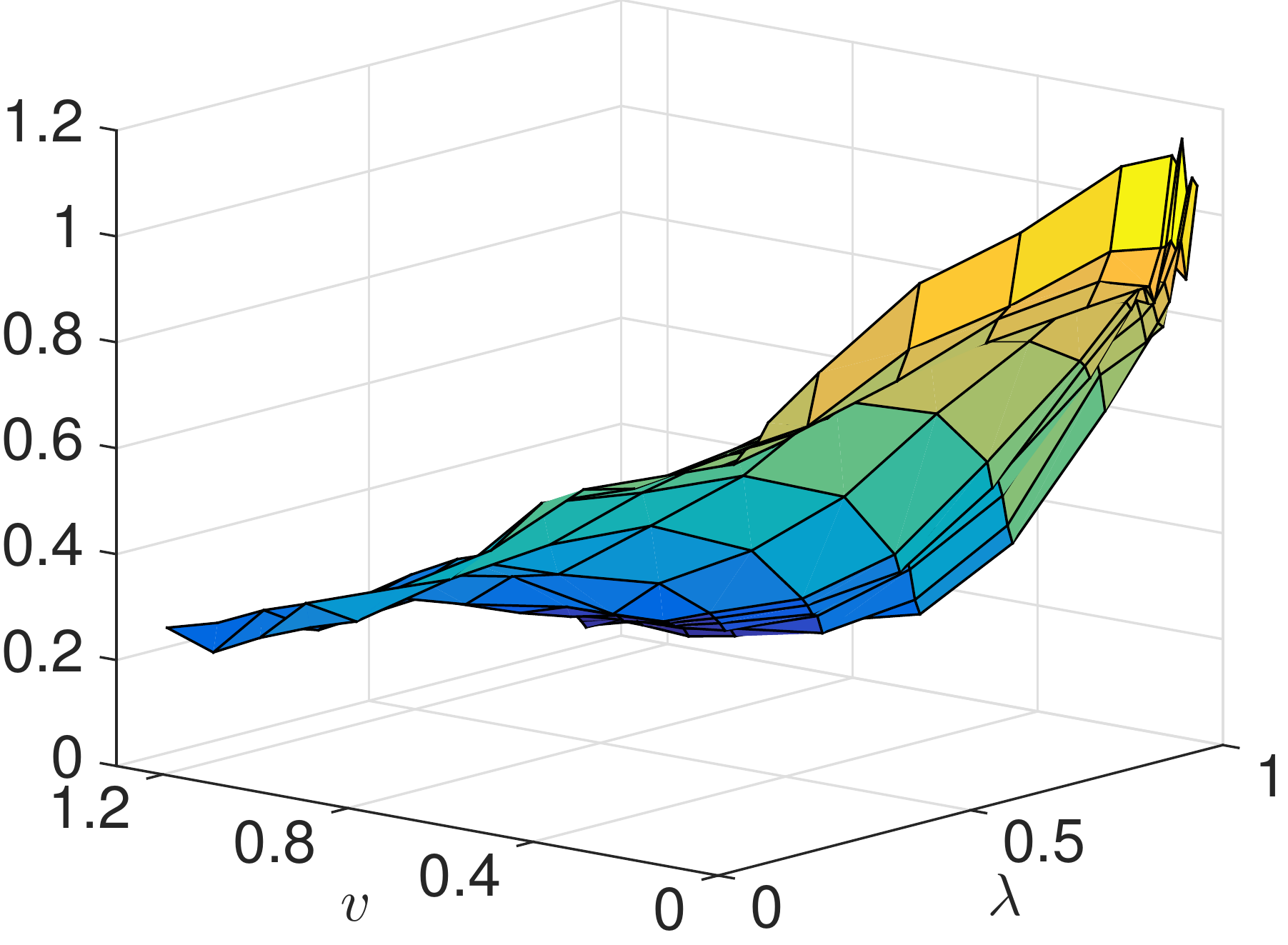}
}
\subfloat[{\it{tanh}}, nonlinear capacity, $n=5$]{
\includegraphics[width=\w]{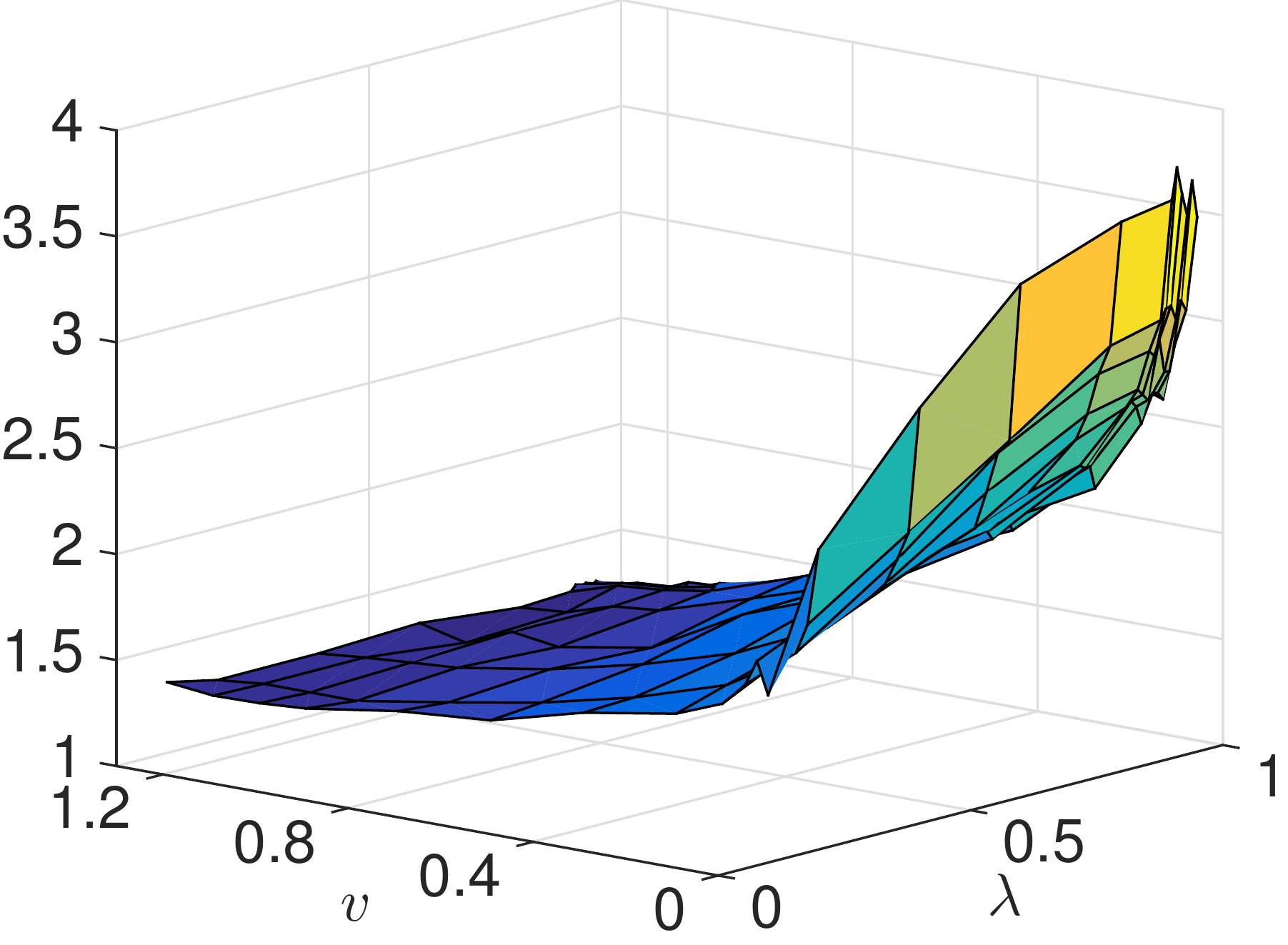}
}
\caption{Sensitivity analysis of the linear memory capacity and the nonlinear computation capacities of order $n=3,5$. The product RC exhibit best memory capacity for high $\lambda$ and low $\omega$, whereas for the {\it{tanh-}} ESN the optimal parameters depend on the type of memory measured.}
\label{fig:NMCsurf}
\end{figure}
To understand the difference between the nonlinear capacity of the product network and the {\it{tanh-}} ESN for $n=3$, we must look at their capacity functions over $\tau$. Figure~\ref{fig:bestNMC3} shows the 3rd order capacity function for the three types of networks as a function of time. The capacity function of each network is chosen for the optimal parameter set of that network. The {\it{tanh-}} ESN can perfectly reconstruct the desired output for just a few recent inputs (short-term memory), while the product RC cannot reconstruct correct values perfectly, but it can do it with larger $\tau$, i.e., longer input histories. This behavior is analogous to high-quality short-term memory in recurrent networks operating in linear regime versus low-quality long-term memory  in nonlinear regime \cite{5596492}. The memory and nonlinear capacity results in this work do not 
consider the statistical significance test  and
only show  qualitative features of product RCs and ESNs. For accurate estimation of the exact
values one need to perform the measurement as described in \cite{Dambre:2012fk}. Also for simplicity we have not applied any reservoir bias to $\tanh$ ESNs, which is known to 
improve the nonlinear capacity of $\tanh$ reservoirs.

Figure~\ref{fig:NMCsurf} shows the sensitivity analysis of memory and nonlinear capacity of both
product RC and $\tanh$ ESN to input weight scaling and reservoir spectral radius. As expected, both product and $\tanh$ reservoirs perform best with high spectral radius and low input weight scaling. Next, we see how the product RC and the standard ESN compare in solving signal processing benchmarks.




\subsection{Chaotic Time Series Prediction}
\subsubsection{Mackey-Glass System Prediction}

The Mackey-Glass system \cite{Mackey15071977} is a delayed differential equation with chaotic dynamics, commonly used as a benchmark for chaotic signal prediction. This system is described by:
\begin{align}
\frac{dx_t}{dt} =  \beta \frac{x_{t-\delta}}{1+x_{t-\delta}^n} - \gamma x_t,
\end{align}
where $\beta=0.2,n=10$, and $\gamma=0.1$ are positive constants and $\delta=17$ is the feedback delay. The reservoir consists of $N=500$ nodes, and we systematically vary the input weight coefficients and the spectral radius in the range $0.1<\omega<1$ and $0.1<\lambda<0.9$. The task is to predict the next $\tau$ integration time steps given $x_t$. We scaled the time series between $[0,1]$ before feeding the network.

\begin{figure*}[ht!]
\centering
\def\w{2in}
\subfloat[product, Mackey-Glass]{
\includegraphics[width=\w]{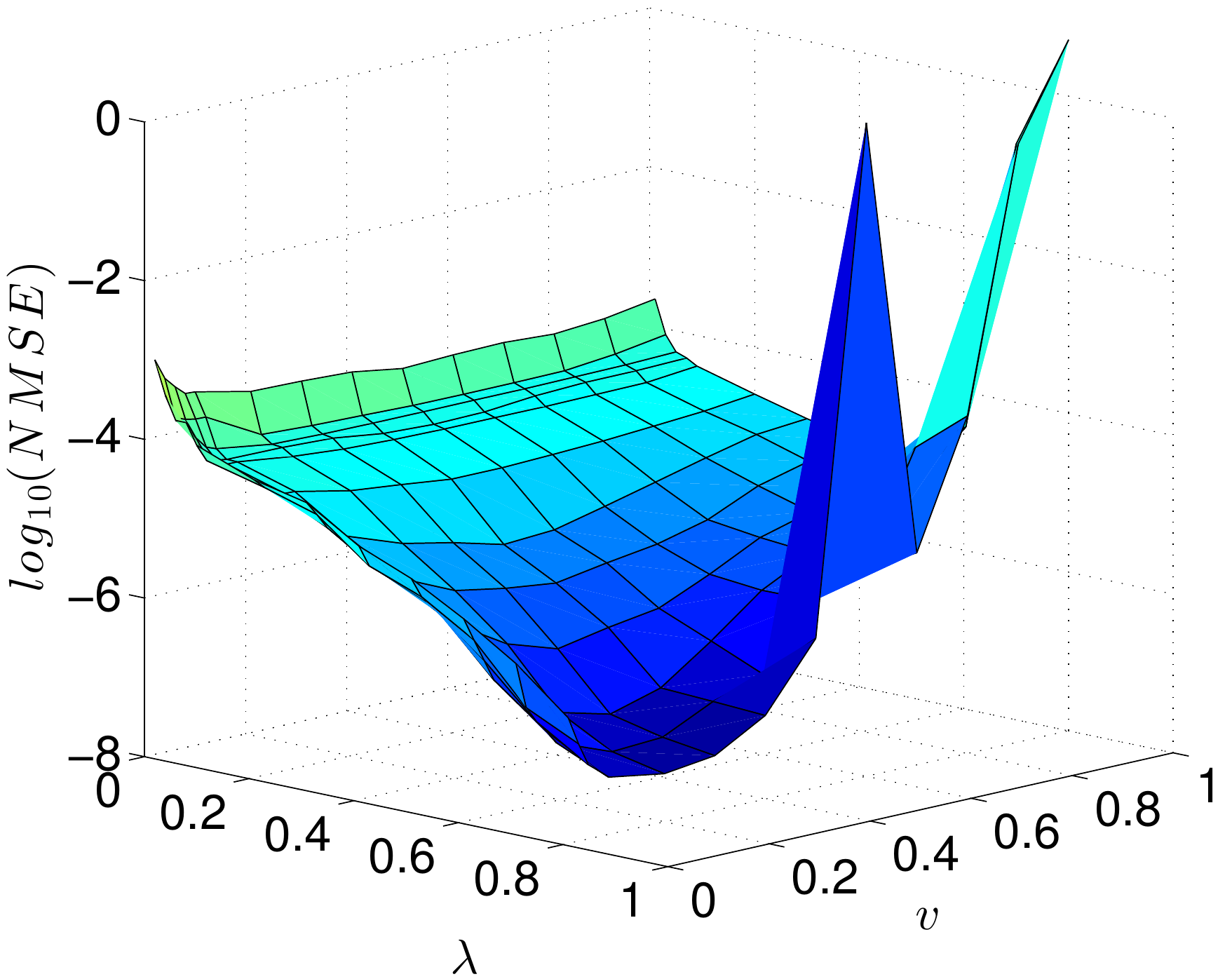}
\label{fig:prodlinMG}
}
\subfloat[{\it{tanh}}, Mackey-Glass]{
\includegraphics[width=\w]{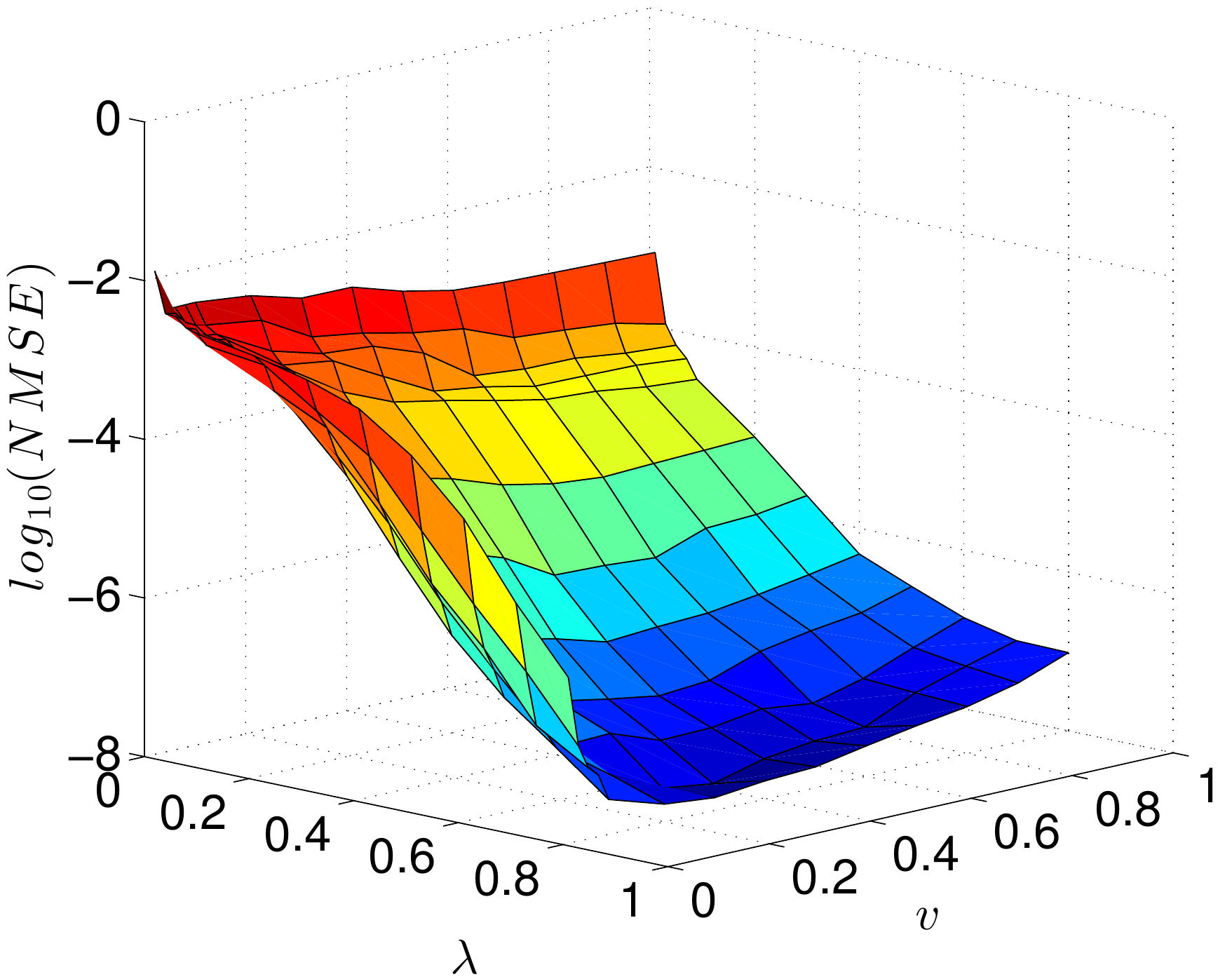}
\label{fig:tanhMG}
}
\subfloat[linear, Mackey-Glass]{
\includegraphics[width=\w]{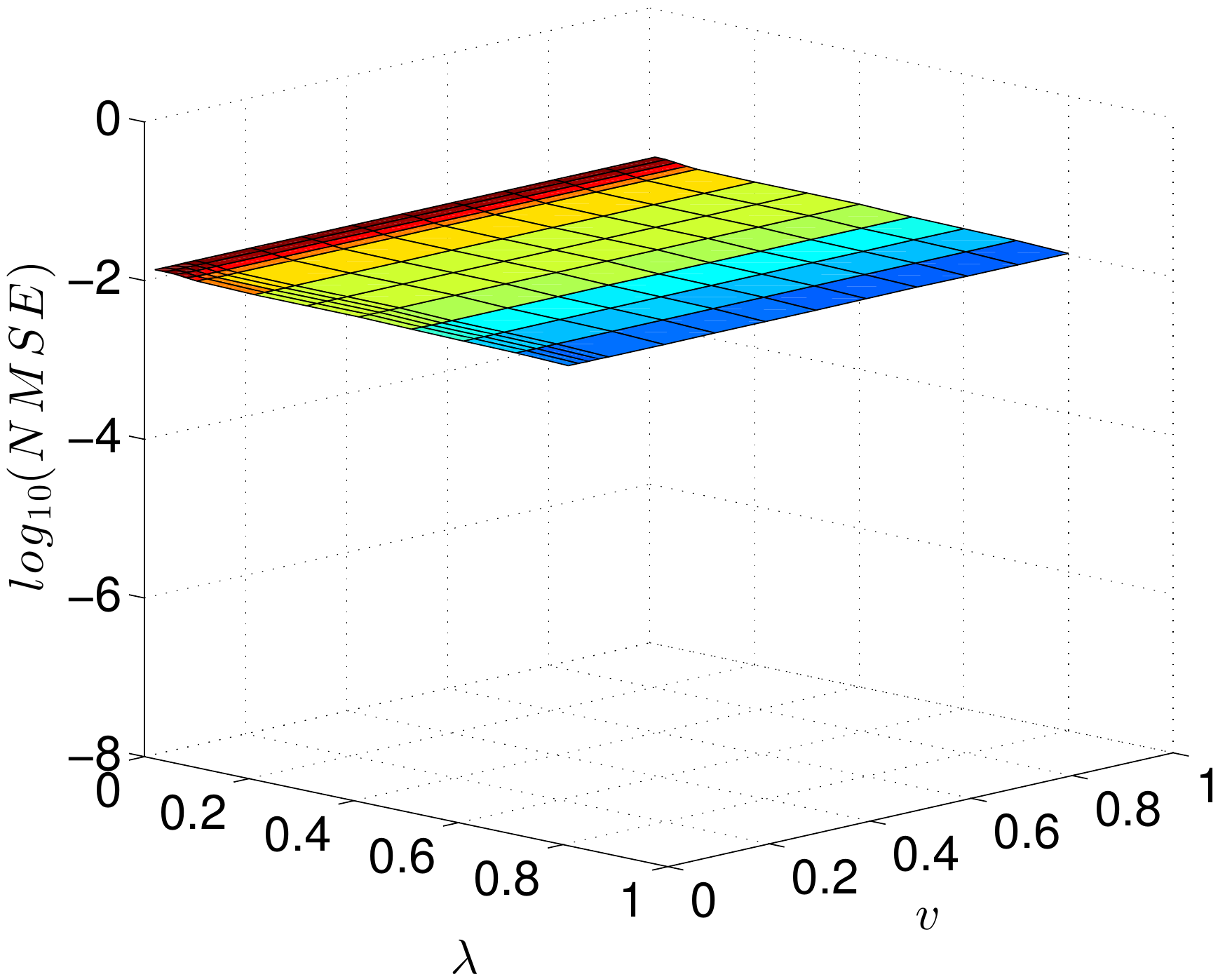}
\label{fig:linMG}
}\\
\subfloat[product, Lorenz]{
\includegraphics[width=\w]{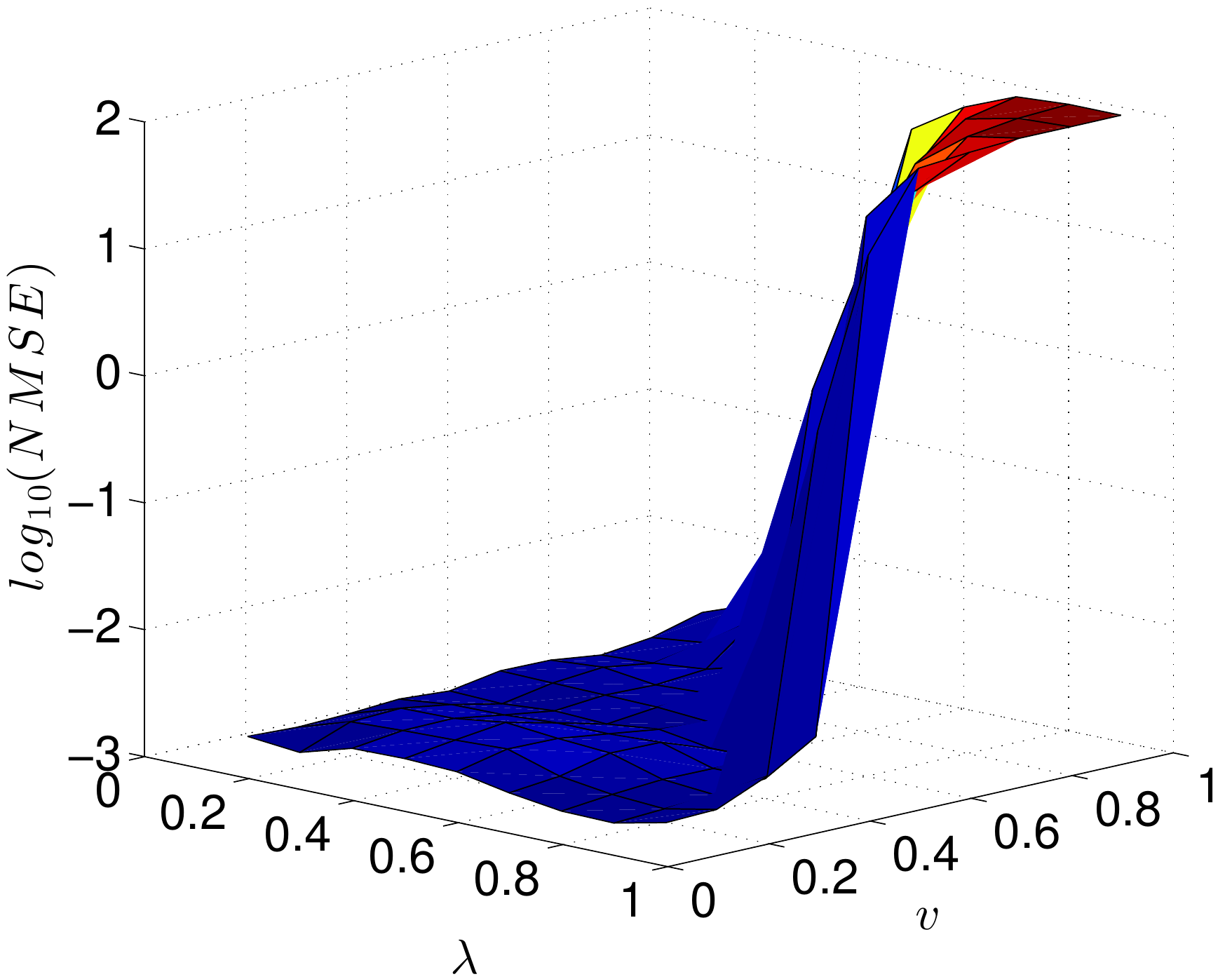}
\label{fig:prodlinLZ}
}
\subfloat[{\it{tanh}}, Lorenz]{
\includegraphics[width=\w]{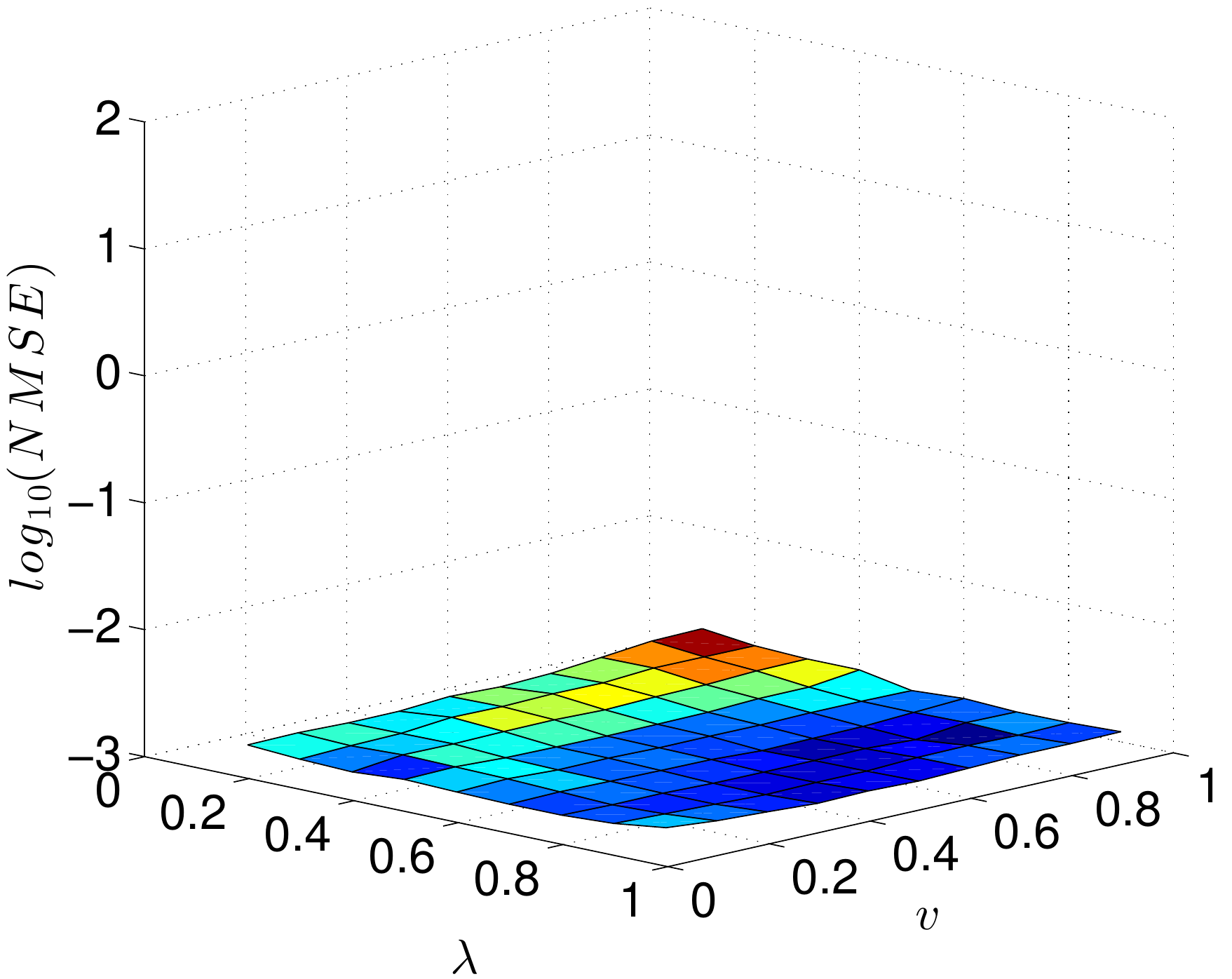}
\label{fig:tanhLZ}
}
\subfloat[linear, Lorenz]{
\includegraphics[width=\w]{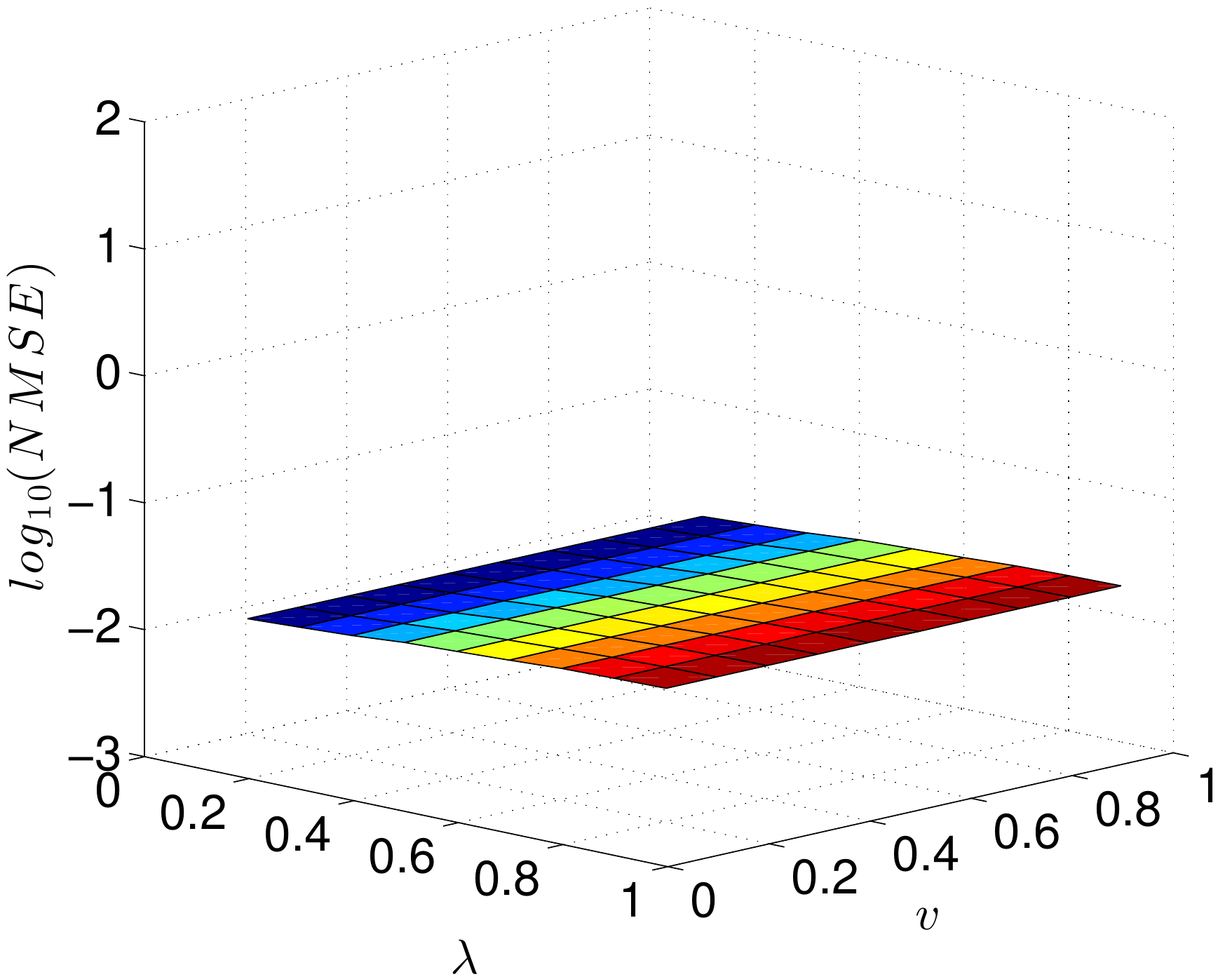}
\label{fig:linLZ}
}
\caption{The performance of one step prediction of the Mackey-Glass and the Lorenz system. The best performance for product RCs are almost identical to the standard ESN with {\it{tanh}} activation function. The standard linear ESN is included for comparison.}
\label{fig:MGLZfigures}
\end{figure*}

\subsubsection{Lorenz System Prediction}
The Lorenz system is another standard benchmark task for chaotic prediction.
The Lorenz system \cite{Lorenz1963} is defined as:

\begin{align}
\begin{split}
\frac{dx_t}{dt} &= \sigma ( y_t-x_t), \\
\frac{dy_t}{dt} &= x_t(\rho - z ) -y_t, \\
\frac{dz_t}{dt} &= x_ty_t - \beta z_t,
\end{split}
\end{align}
where $\beta = 8/3$, $\rho = 28$, and $\sigma=10$. These values give rise  to chaotic dynamics, making the system a suitable benchmark for multi-dimensional chaotic time-series prediction. The reservoir consists of $N=500$ nodes, and we systematically vary the input weight coefficients and the spectral radius in the range $0.1<\omega<1$ and $0.1<\lambda<0.9$. We feed all three variables to our systems, after scaling each variable on the interval $[0,1]$. The task is to produce the next $\tau$ integration time steps for all three variables. We evaluate the performance $NMSE_{tot}$ by calculating $NMSE$ for each output and adding them together.


\subsubsection{Results}

Figure~\ref{fig:MGLZfigures} shows the performance of the product and standard ESN in predicting the next time step of the time series. Product RC achieves comparable accuracy level as standard ESN with {\it{tanh}} activation (see Figures~\ref{fig:prodlinMG}~and~\ref{fig:tanhMG}). Similarly, for the Lorenz system both product and {\it{tanh-}} ESNs show a similar performance at their optimal parameters (see Figures~\ref{fig:prodlinLZ}~and~\ref{fig:tanhLZ}). We have included the linear ESN for comparison. In our experiments, the parameters $\omega=0.1$ and $\lambda=0.8$ are optimal for the product and the {\it{tanh-}} ESNs for both tasks. The full sensitivity analysis reveals that the product and the {\it{tanh-}} ESNs show task-dependent sensitivity in different parameter ranges. For example, for the product RC on the Mackey-Glass task, decreasing $\lambda$ increases the error by 4 orders of magnitude, whereas the {\it{tanh-}} ESN's error increases by 5 orders of magnitude. On the other hand, the {\it{tanh-}} ESN is robust to increasing $\lambda$, while the product RC loses its power due to instability in the network dynamics. For the Lorenz system, high $\omega$ and $\lambda$ destabilizes the product RC dynamics, which completely destroys the power of the system. However, the performance of the {\it{tanh-}} ESN does not vary significantly with changes in the parameter, because the Lorenz system does not require any memory. The linear ESN does not show any sensitivity to the parameter space.  

We then use the optimal values of the parameters to test and compare the performance of multi-step prediction of both Mackey-Glass and Lorenz systems. The task is for the system to produce the correct values 
\begin{figure}
\centering
\subfloat[Mackey-Glass]{
\includegraphics[width=1.6in]{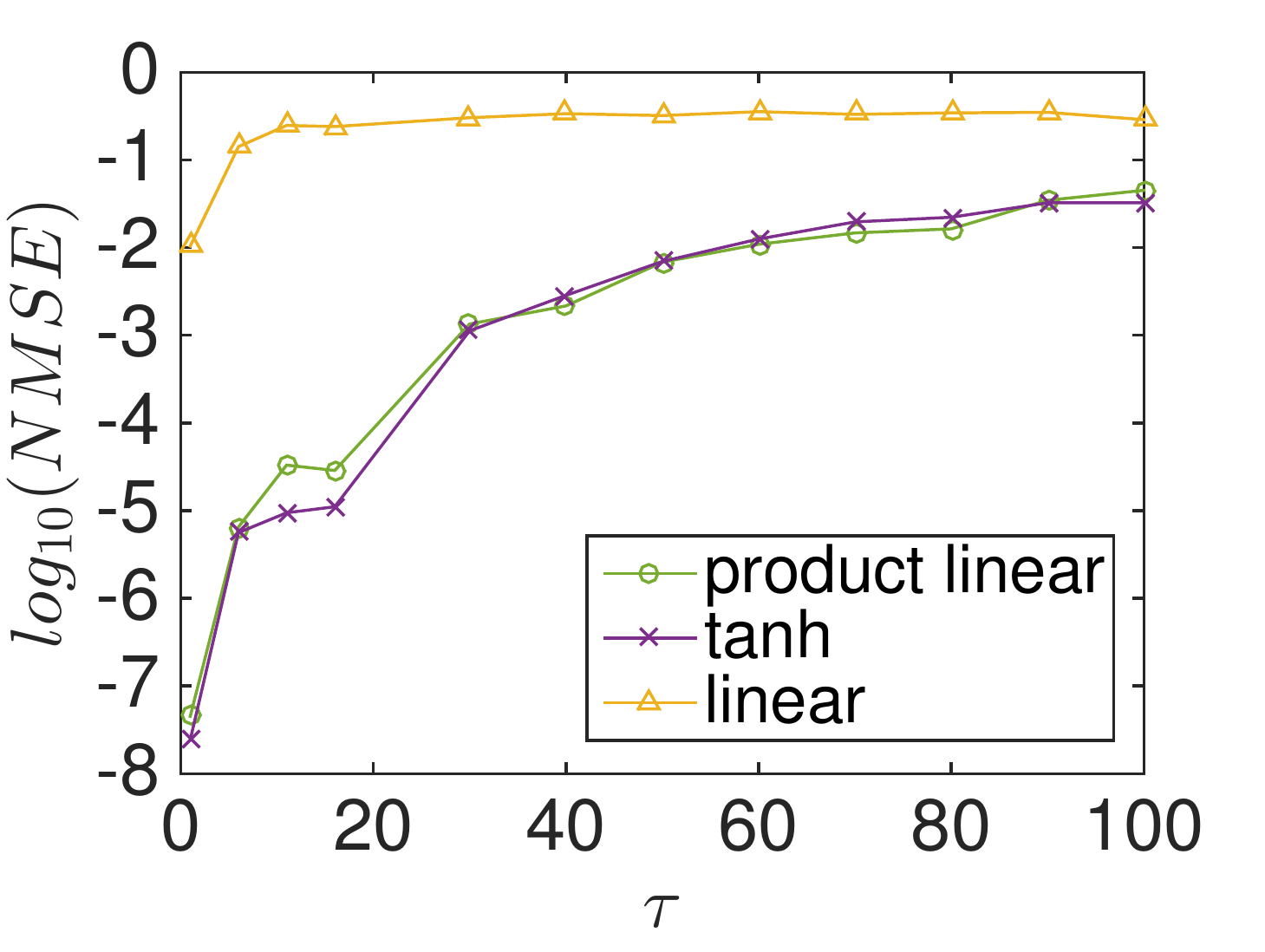}}
\subfloat[Lorenz]{
\includegraphics[width=1.6in]{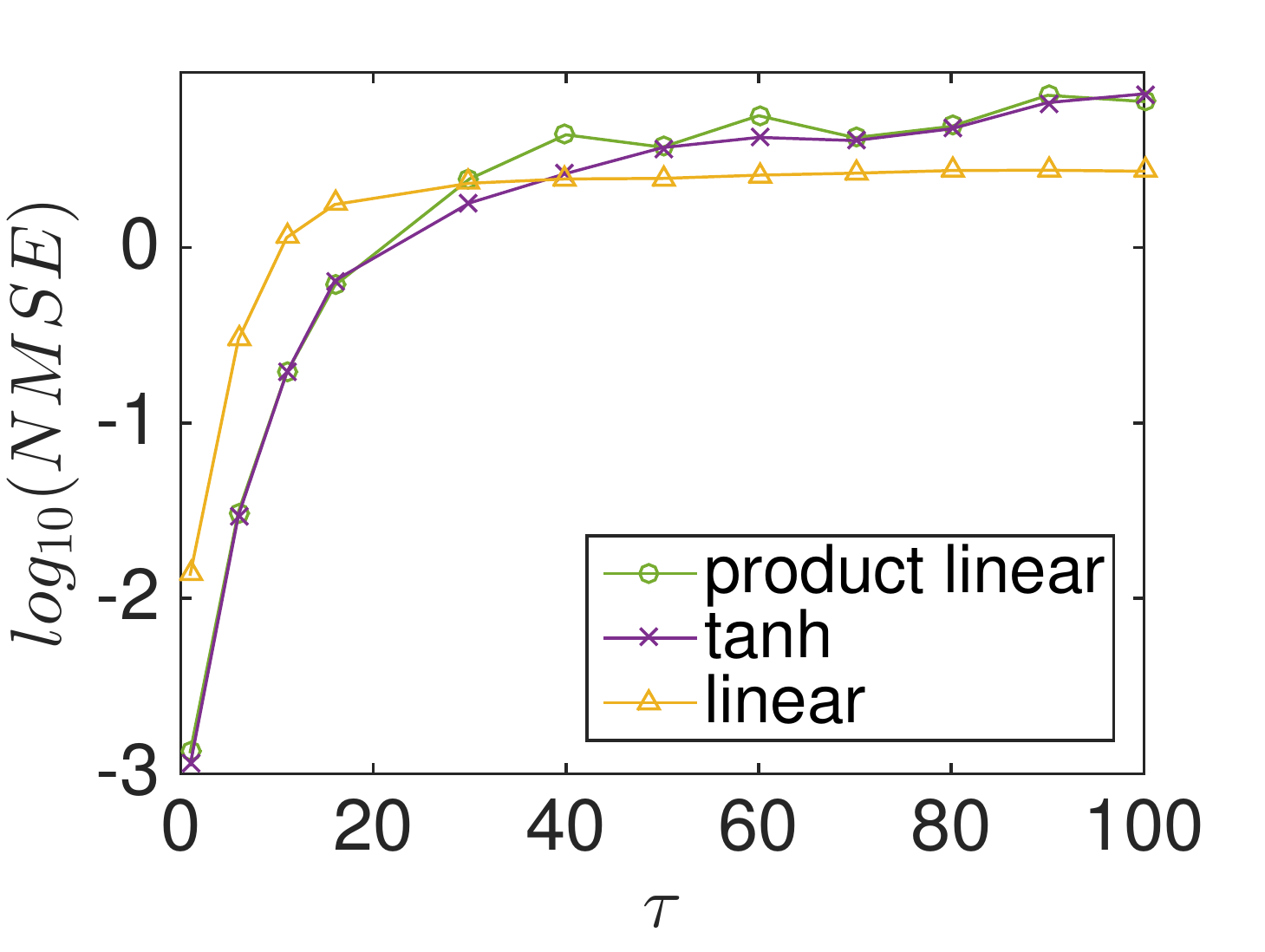}}
\caption{The performance of multi-step prediction of the Mackey-Glass and the Lorenz systems. product RCs perform almost identically to the standard ESN with {\it{tanh}} activation function. The standard linear ESN is included for comparison.}
\label{fig:MGmultistep}
\end{figure}
for the systems $\tau$ step ahead. Figure~\ref{fig:MGmultistep} shows the $\text{log}_{10}(NMSE)$ for different $\tau$. The product RCs show performance quality similar to the standard {\it{tanh-}} ESNs. The standard ESN with linear activation is included for comparison.

\section{Conclusion and outlook}

Nonlinearity of neural response is essential for real-time computational tasks that involve strong time variations of the input data. In this manuscript, we considered neural networks with a basic nonlinear property that  a neuron  outputs a weighted product of synaptic inputs. The presented modeling is an abstract representation of some type of neurons, whose response function is not fully captured by commonly used synaptic sums followed by a {\it{tanh}} thresholding function. We evaluated the performance of a neural network with such product units in the computational paradigm of reservoir computing. Product RCs were found to be comparably powerful for nonlinear computation. For the nonlinear capacity task we have used the standard versions of product RC
and $\tanh$ ESNs for simplicity. In our preliminary experiments we have observed that 
the use of bias in $\tanh$ ESNs and multiplicative readout layer for product RCs can
significantly improve their performance. We defer a fuller analysis of these architectural 
variations to future work. On standard tests, we found that for Mackey-Glass and Lorenz systems prediction, a network of product units performs  as well as a network of {\it{tanh}} units. For both types of nonlinear networks we found that the best performance is achieved with  relatively small input weights, which does not take advantage of the full nonlinearity of the reservoir nodes. For {\it{tanh-}} networks, this nonlinear advantage will completely disappear for very small input weights. We will present a detailed study of this subtle behavior in a forthcoming paper \cite{goudarzi2015b}. Overall, our findings suggest that neural networks with product neurons  may have stronger capacity than {\it{tanh}} neurons for certain real-time data processing tasks.


\section*{Acknowledgment}
This material is based upon work supported by the National Science Foundation under grants CDI-1028238 and CCF-1318833.


\bibliographystyle{IEEEtran}
\bibliography{ijcnn2015}

\begin{thebibliography}{10}
\providecommand{\url}[1]{#1}
\csname url@samestyle\endcsname
\providecommand{\newblock}{\relax}
\providecommand{\bibinfo}[2]{#2}
\providecommand{\BIBentrySTDinterwordspacing}{\spaceskip=0pt\relax}
\providecommand{\BIBentryALTinterwordstretchfactor}{4}
\providecommand{\BIBentryALTinterwordspacing}{\spaceskip=\fontdimen2\font plus
\BIBentryALTinterwordstretchfactor\fontdimen3\font minus
  \fontdimen4\font\relax}
\providecommand{\BIBforeignlanguage}[2]{{%
\expandafter\ifx\csname l@#1\endcsname\relax
\typeout{** WARNING: IEEEtran.bst: No hyphenation pattern has been}%
\typeout{** loaded for the language `#1'. Using the pattern for}%
\typeout{** the default language instead.}%
\else
\language=\csname l@#1\endcsname
\fi
#2}}
\providecommand{\BIBdecl}{\relax}
\BIBdecl

\bibitem{Wang2001455}
X.-J. Wang, ``Synaptic reverberation underlying mnemonic persistent activity,''
  \emph{Trends in Neurosciences}, vol.~24, no.~8, pp. 455 -- 463, 2001.

\bibitem{Dominey:1995qo}
P.~Dominey, M.~Arbib, and J.-P. Joseph, ``A model of corticostriatal plasticity
  for learning oculomotor associations and sequences,'' \emph{Journal of
  Cognitive Neuroscience}, vol.~7, no.~3, pp. 311--336, 1995.

\bibitem{10.1371/journal.pone.0052946}
X.~Hinaut and P.~F. Dominey, ``Real-time parallel processing of grammatical
  structure in the fronto-striatal system: A recurrent network simulation study
  using reservoir computing,'' \emph{PLoS ONE}, vol.~8, no.~2, p. e52946, 2013.

\bibitem{Maass:2002p1444}
W.~Maass, T.~Natschl{\"a}ger, and H.~Markram, ``Real-time computing without
  stable states: a new framework for neural computation based on
  perturbations,'' \emph{Neural Computation}, vol.~14, no.~11, pp. 2531--60,
  2002.

\bibitem{Jaeger02042004}
H.~Jaeger and H.~Haas, ``Harnessing nonlinearity: Predicting chaotic systems
  and saving energy in wireless communication,'' \emph{Science}, vol. 304, no.
  5667, pp. 78--80, 2004.

\bibitem{verstraeten2007}
{D. Verstraeten, B. Schrauwen, M. D'Haene and D. Stroobandt}, ``An experimental
  unification of reservoir computing methods,'' \emph{Neural Networks},
  vol.~20, no.~3, pp. 391--403, 2007.

\bibitem{springerlink:10.1007}
M.~Luko{\v s}evi{\v c}ius, H.~Jaeger, and B.~Schrauwen,
  ``\BIBforeignlanguage{English}{Reservoir computing trends},''
  \emph{\BIBforeignlanguage{English}{KI - K{\"u}nstliche Intelligenz}},
  vol.~26, no.~4, pp. 365--371, 2012.

\bibitem{PhysRevLett.92.148102}
O.~L. White, D.~D. Lee, and H.~Sompolinsky, ``Short-term memory in orthogonal
  neural networks,'' \emph{Phys. Rev. Lett.}, vol.~92, p. 148102, 2004.

\bibitem{Dambre:2012fk}
J.~Dambre, D.~Verstraeten, B.~Schrauwen, and S.~Massar, ``Information
  processing capacity of dynamical systems,'' \emph{Sci. Rep.}, vol.~2, 07
  2012.

\bibitem{Hermans:2011fk}
M.~Hermans and B.~Schrauwen, ``Recurrent kernel machines: Computing with
  infinite echo state networks,'' \emph{Neural Computation}, vol.~24, no.~1,
  pp. 104--133, 2011.

\bibitem{Bertschinger:2004p1450}
N.~Bertschinger and T.~Natschl{\"a}ger, ``Real-time computation at the edge of
  chaos in recurrent neural networks,'' \emph{Neural Computation}, vol.~16,
  no.~7, pp. 1413--1436, 2004.

\bibitem{PhysRevE.87.042808}
D.~Snyder, A.~Goudarzi, and C.~Teuscher, ``Computational capabilities of random
  automata networks for reservoir computing,'' \emph{Phys. Rev. E}, vol.~87, p.
  042808, 2013.

\bibitem{4905041020100501}
L.~B{\"u}sing, B.~Schrauwen, and R.~Legenstein, ``Connectivity, dynamics, and
  memory in reservoir computing with binary and analog neurons.'' \emph{Neural
  Computation}, vol.~22, no.~5, pp. 1272--1311, 2010.

\bibitem{Goudarzi2014176}
A.~Goudarzi and D.~Stefanovic, ``Towards a calculus of echo state networks,''
  \emph{Procedia Computer Science}, vol.~41, pp. 176 -- 181, 2014.

\bibitem{DBLP:journals/nn/YildizJK12}
I.~B. Yildiz, H.~Jaeger, and S.~J. Kiebel, ``Re-visiting the echo state
  property,'' \emph{Neural Networks}, vol.~35, pp. 1--9, 2012.

\bibitem{Durbin:1989wu}
R.~Durbin and D.~E. Rumelhart, ``Product units: A computationally powerful and
  biologically plausible extension to backpropagation networks,'' \emph{Neural
  Computation}, vol.~1, no.~1, pp. 133--142, 1989.

\bibitem{McAdams01011999}
C.~J. McAdams and J.~H.~R. Maunsell, ``Effects of attention on
  orientation-tuning functions of single neurons in macaque cortical area
  {V4},'' \emph{The Journal of Neuroscience}, vol.~19, no.~1, pp. 431--441,
  1999.

\bibitem{Andersen25101985}
R.~Andersen, G.~Essick, and R.~Siegel, ``Encoding of spatial location by
  posterior parietal neurons,'' \emph{Science}, vol. 230, no. 4724, pp.
  456--458, 1985.

\bibitem{Treue01091999}
S.~Treue and J.~H.~R. Maunsell, ``Effects of attention on the processing of
  motion in macaque middle temporal and medial superior temporal visual
  cortical areas,'' \emph{The Journal of Neuroscience}, vol.~19, no.~17, pp.
  7591--7602, 1999.

\bibitem{Gabbiani:2002kl}
F.~Gabbiani, H.~G. Krapp, C.~Koch, and G.~Laurent, ``Multiplicative computation
  in a visual neuron sensitive to looming,'' \emph{Nature}, vol. 420, no. 6913,
  pp. 320--324, 2002.

\bibitem{Geiger:1974fr}
G.~Geiger, ``Optomotor responses of the fly musca domestica to transient
  stimuli of edges and stripes,'' \emph{Kybernetik}, vol.~16, no.~1, pp.
  37--43, 1974.

\bibitem{Gtz:1975qf}
K.~G{\" o}tz, ``The optomotor equilibrium of the drosophila navigation
  system,'' \emph{Journal of comparative physiology}, vol.~99, no.~3, pp.
  187--210, 1975.

\bibitem{Pena13042001}
J.~L. Pe{\~n}a and M.~Konishi, ``Auditory spatial receptive fields created by
  multiplication,'' \emph{Science}, vol. 292, no. 5515, pp. 249--252, 2001.

\bibitem{377967}
Y.~Shin and J.~Ghosh, ``Ridge polynomial networks,'' \emph{Neural Networks,
  IEEE Transactions on}, vol.~6, no.~3, pp. 610--622, 1995.

\bibitem{Giles:87}
C.~L. Giles and T.~Maxwell, ``Learning, invariance, and generalization in
  high-order neural networks,'' \emph{Appl. Opt.}, vol.~26, no.~23, pp.
  4972--4978, 1987.

\bibitem{Collingridge:px}
G.~L. Collingridge and T.~V.~P. Bliss, ``{NMDA} receptors - their role in
  long-term potentiation,'' \emph{Trends in Neurosciences}, vol.~10, no.~7, pp.
  288--293, 1987.

\bibitem{Reichert2014}
D.~P. Reichert and T.~Serre, ``Neuronal synchrony in complex-valued deep
  networks,'' \emph{arXiv:1312.6115v5 [stat.ML]}, 2014.

\bibitem{5596492}
D.~Verstraeten, J.~Dambre, X.~Dutoit, and B.~Schrauwen, ``Memory versus
  non-linearity in reservoirs,'' in \emph{Neural Networks (IJCNN), The 2010
  International Joint Conference on}, July 2010, pp. 1--8.

\bibitem{Jaeger:2001p1442}
H.~Jaeger, ``The ``echo state" approach to analysing and training recurrent
  neural networks,'' St. Augustin: German National Research Center for
  Information Technology, Tech. Rep. GMD Rep. 148, 2001.

\bibitem{1629106}
M.~Buehner and P.~Young, ``A tighter bound for the echo state property,''
  \emph{Neural Networks, IEEE Transactions on}, vol.~17, no.~3, pp. 820--824,
  2006.

\bibitem{6105577}
B.~Zhang, D.~Miller, and Y.~Wang, ``Nonlinear system modeling with random
  matrices: Echo state networks revisited,'' \emph{Neural Networks and Learning
  Systems, IEEE Transactions on}, vol.~23, no.~1, pp. 175--182, 2012.

\bibitem{Mackey15071977}
M.~C. Mackey and L.~Glass, ``Oscillation and chaos in physiological control
  systems,'' \emph{Science}, vol. 197, no. 4300, pp. 287--289, 1977.

\bibitem{Lorenz1963}
E.~N. Lorenz, ``Deterministic nonperiodic flow,'' \emph{Journal of the
  Atmospheric Sciences}, vol.~20, no.~2, pp. 130--141, 1963.

\bibitem{goudarzi2015b}
A.~Goudarzi, A.~Shabani, and D.~Stefanovic, ``Exploring transfer function
  nonlinearity in echo state networks,'' \emph{arXiv:1502.04423 [cs.NE]}, 2015.

\end{thebibliography}

\end{document}